\documentclass{article} 
\usepackage{iclr2026_conference,times}


\usepackage{amsmath,amsfonts,bm}









\def\eqref#1{equation~\ref{#1}}









\def\1{\bm{1}}










\DeclareMathAlphabet{\mathsfit}{\encodingdefault}{\sfdefault}{m}{sl}
\SetMathAlphabet{\mathsfit}{bold}{\encodingdefault}{\sfdefault}{bx}{n}













\usepackage{wrapfig}
\usepackage{hyperref}
\usepackage{url}
\usepackage{amsthm}
\usepackage{enumitem}
\usepackage{amsmath}
\usepackage{amssymb}
\usepackage{amsfonts}
\usepackage{algorithm}
\usepackage{algpseudocode}
\usepackage{booktabs}
\usepackage{graphicx}
\usepackage[table]{xcolor}
\usepackage{array}
\usepackage{multirow}
\usepackage{subcaption}
\usepackage{caption}  
\usepackage{titletoc}

\graphicspath{{iclr2026/fig_repository/}} 
\usepackage{pifont}
\definecolor{YaleBlue}{rgb}{0.059,0.302,0.573}
\definecolor{forestgreen}{rgb}{0.133,0.549,0.133}
\definecolor{lightblue}{rgb}{0.796, 0.894, 0.9808}
\definecolor{crimson}{rgb}{0.863,0.078,0.235}
\definecolor{naturegreen}{RGB}{0,102,85} 
\definecolor{naturegray}{RGB}{248,248,248} 

\setlength{\abovedisplayskip}{4pt}
\setlength{\belowdisplayskip}{4pt}

\newtheorem{theorem}{Theorem}
\newtheorem{lemma}{Lemma}
\newtheorem{proposition}{Proposition}
\newtheorem{corollary}{Corollary}
\theoremstyle{definition}
\newtheorem{definition}{Definition}

\theoremstyle{remark}

\newtheorem{assumption}{Assumption}

\title{Stochastic Interpolants via Conditional Dependent Coupling}

\author{
Chenrui Ma$^{1}$ \quad
Xi Xiao$^{2,3}$ \quad
Tianyang Wang$^{2}$ \quad
Xiao Wang$^{3}$ \quad
Yanning Shen$^{1*}$ \\
\\[-0.8em]
$^{1}$University of California, Irvine, CA 92697, USA \\
$^{2}$University of Alabama at Birmingham, Birmingham, AL 35294, USA \\
$^{3}$Oak Ridge National Laboratory, Oak Ridge, TN 37831, USA \\
\\[-0.5em]
$^{*}$Corresponding author: \texttt{yannings@uci.edu}
}

%

\iclrfinalcopy 
\begin{document}

\maketitle

\begin{abstract}

Existing image generation models face critical challenges regarding the trade-off between computation and fidelity. Specifically, models relying on a pretrained Variational Autoencoder (VAE) suffer from information loss, limited detail, and the inability to support end-to-end training. In contrast, models operating directly in the pixel space incur prohibitive computational cost. Although cascade models can mitigate computational cost, stage-wise separation prevents effective end-to-end optimization, hampers knowledge sharing, and often results in inaccurate distribution learning within each stage. To address these challenges,
we introduce a unified multistage generative framework based on our proposed \textbf{Conditional Dependent Coupling} strategy. It decomposes the generative process into interpolant trajectories at multiple stages, ensuring accurate distribution learning while enabling end-to-end optimization. Importantly, the entire process is modeled as a single unified Diffusion Transformer, eliminating the need for disjoint modules and also enabling knowledge sharing.
Extensive experiments demonstrate that our method achieves both high fidelity and efficiency across multiple resolutions. 

\end{abstract}

\section{Introduction}



Generative models have achieved remarkable progress in recent years, driving advances in diverse domains such as natural language processing~\cite{openai2024gpt4technicalreport}, computer vision~\cite{geng2025mean}, and scientific modeling~\cite{fotiadis2024stochastic}. Unlike discrete data generation (e.g., large language models for text), high-dimensional continuous data generation, such as image synthesis, suffers from high complexity and computational demands~\cite{rombach2022ldm, peebles2023dit, ma2024sit}. The primary challenge lies in balancing fidelity, efficiency, and scalability when learning to approximate intricate data distributions~\cite{sun2024llamagen, tian2024var}.

There are two main paradigms for generative modeling: pixel space generation and latent space generation. Pixel space generation operates directly in the original data domain, preserving fine-grained details without compression~\cite{bao2023uvit, hoogeboom2025sid2}. However, this approach suffers from extreme inefficiency due to the high dimensionality of images, resulting in expensive training and inference~\cite{rombach2022ldm, kingma2023vdmpp}. In contrast, latent space generation leverages compact representations, typically through Variational Autoencoders (VAEs), to reduce dimensionality and improve efficiency~\cite{li2024autoregressive, esser2021taming}. While effective, these methods inevitably incur information loss during encoding and decoding, and often cannot be trained in a fully end-to-end manner with the generative backbone~\cite{jin2024pyramidal, jiao2025flexvar}.

To mitigate these limitations, multi-stage generation methods have been proposed. By decomposing the synthesis process into a sequence of stages, they allow early stages to operate in lower-dimensional spaces and later stages to refine the outputs at higher resolutions~\cite{tian2024var, chen2025pixelflow}. This hierarchical design improves efficiency and progressively captures data complexity~\cite{li2025fractalmar, ren2024flowar}. Nonetheless, existing multi-stage frameworks rely on disentangled stage-specific models, preventing unified parameterization and end-to-end optimization~\cite{ho2022cdm, kim2024pagoda}. Furthermore, the decoupled design may lead to inaccurate distribution modeling, with errors compounding across stages~\cite{chen2025pixelflow, jin2024pyramidal}.

In this work, we propose a novel multi-stage generation framework based on \emph{stochastic interpolant}~\cite{albergo2023stochastic, albergo2022building} with \textbf{conditional dependent coupling}. Our approach generates high-resolution images in a coarse-to-fine manner, essentially addressing the notorious limitations of the multi-stage generation paradigm. Specifically, the proposed method enables efficient and high-performance generation directly in pixel space, thus retaining detailed information without the bottleneck of latent compression. Moreover, by proposing \textbf{conditional dependent coupling}, we unify the multi-stage generation process into one coherent framework, ensuring accurate distribution learning at each stage. 
Notably, the entire multi-stage framework can be parameterized by a single DiT~\cite{peebles2023scalable}, 
facilitating knowledge sharing across stages and enabling full end-to-end optimization. 
We further provide formal proof that the proposed framework significantly reduces the transport cost and inference time. 
See the Related Works in the Appendix. The source code will be made publicly available upon acceptance.


\section{Preliminaries} \label{sec:preliminary}
The \emph{stochastic interpolant}~\cite{albergo2023stochastic, albergo2022building} unifies the theory of Ordinary Differential Equations (ODEs) and Stochastic Differential Equations (SDEs).
Our method, which was developed based on this theory, is therefore comparable to both the Flow/ODEs and Diffusion/SDEs generation.
For notation simplicity and efficient information delivery, in the rest of the paper, we focus on the Flow/ODEs. However, the complete \emph{stochastic interpolant} theory that introduces an additional noise term and recovers Diffusions/SDEs is presented in Appendix~\S\ref{appendix:SI}, where we show that the deterministic interpolant definitions below can be generalized readily.

\begin{definition}[Deterministic interpolant for Flow Matching] \label{def:Deterministic_interpolant_and_Flow_Matching}
Given two probability densities $\rho_0,\rho_1:\mathbb{R}^d\!\to\!\mathbb{R}_{\ge 0}$ and a coupling $\rho(x_0,x_1)$ with marginals $\rho_0$ and $\rho_1$, we define the deterministic interpolant~\cite{lipman2024flow, lipman2022flow}:
\begin{equation}
\label{eq:flow_interpolant}
I_t \;=\; \alpha_t x_0 + \beta_t x_1,\qquad t\in[0,1],
\end{equation}
where $\alpha_t,\beta_t$ are differentiable in $t$ and satisfy the boundary conditions:
\[
\alpha_0=\beta_1=1,\qquad \alpha_1=\beta_0=0,\qquad \alpha_t^2+\beta_t^2>0\ \ \forall t\in[0,1].
\]
This yields a time–dependent density $\rho_t$ for $I_t$ with $\rho_{t=0}=\rho_0$ and $\rho_{t=1}=\rho_1$.
\end{definition}

\begin{theorem}[Transport continuity equation for Flow model]
Let $b_t:\mathbb{R}^d\to\mathbb{R}^d$ denote the \emph{conditional velocity field}~\cite{lipman2024flow, lipman2022flow}:
\begin{equation}
\label{eq:b}
b_t(x) \;=\; \mathbb{E}\!\left[\,\dot I_t \,\middle|\, I_t=x\,\right],
\end{equation}
where $\dot f= \frac{\mathrm{d}f}{\mathrm{d}t}$ and the expectation is over $(x_0,x_1)\!\sim\!\rho(x_0,x_1)$. The interpolant density $\rho_t$ solves the transport continuity equation~\cite{villani2008optimal, villani2021topics}:
\begin{equation}
\label{eq:transportEquation}
\partial_t \rho_t(x) \;+\; \nabla\!\cdot\!\big(b_t(x)\,\rho_t(x)\big) \;=\; 0.
\end{equation}
In practice, we learn a model velocity $b_t$ to approximate $b_t$ by minimizing:
\begin{equation}
\label{eq:b_objective}
L_b(\hat b)\;=\;\int_0^1 \mathbb{E}\!\left[\ \big\lvert \hat b_t(I_t)\big\rvert^2 \;-\; 2\,\dot I_t\!\cdot\!\hat b_t(I_t)\ \right]\mathrm{d}t,
\end{equation}
which is estimable from samples $(x_0,x_1)\!\sim\!\rho(x_0,x_1)$.
\end{theorem}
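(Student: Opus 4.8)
The plan is to prove the two assertions in turn: that the interpolant density $\rho_t$ satisfies \eqref{eq:transportEquation} with the drift defined in \eqref{eq:b}, and that this same drift is the unique minimizer of the quadratic objective in \eqref{eq:b_objective}. Both reduce to elementary manipulations of conditional expectations once the right viewpoint is chosen.

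For the continuity equation I would adopt the weak formulation. Fix a test function $\phi\in C_c^\infty(\R^d)$. Because $\rho_t$ is by construction the law of $I_t$, we have $\int_{\R^d}\phi(x)\rho_t(x)\,\mathrm{d}x=\E[\phi(I_t)]$. Differentiating in $t$ and passing the derivative inside the expectation, the chain rule applied to $I_t=\alpha_t x_0+\beta_t x_1$ gives $\frac{\mathrm{d}}{\mathrm{d}t}\E[\phi(I_t)]=\E[\nabla\phi(I_t)\cdot\dot I_t]$. The decisive step is the tower property of conditional expectation: since $\nabla\phi(I_t)$ is a function of $I_t$, conditioning on $I_t$ factors it out, so $\E[\nabla\phi(I_t)\cdot\dot I_t]=\E[\nabla\phi(I_t)\cdot\E[\dot I_t\mid I_t]]=\E[\nabla\phi(I_t)\cdot b_t(I_t)]$ by the definition of $b_t$ in \eqref{eq:b}. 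Rewriting this expectation as $\int\nabla\phi(x)\cdot b_t(x)\,\rho_t(x)\,\mathrm{d}x$ and integrating by parts, with the boundary term vanishing by compact support of $\phi$, yields $-\int\phi(x)\,\nabla\cdot(b_t(x)\rho_t(x))\,\mathrm{d}x$. Since $\frac{\mathrm{d}}{\mathrm{d}t}\int\phi\rho_t\,\mathrm{d}x=\int\phi\,\partial_t\rho_t\,\mathrm{d}x$ and $\phi$ is arbitrary, the fundamental lemma of the calculus of variations delivers \eqref{eq:transportEquation}.

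For the variational claim I would complete the square inside the integrand. Pointwise in $t$ the algebraic identity $|\hat b_t(I_t)|^2-2\dot I_t\cdot\hat b_t(I_t)=|\hat b_t(I_t)-\dot I_t|^2-|\dot I_t|^2$ shows that minimizing $L_b(\hat b)$ is equivalent to minimizing $\int_0^1\E\big[\,|\hat b_t(I_t)-\dot I_t|^2\,\big]\,\mathrm{d}t$, since the discarded term $\E[|\dot I_t|^2]$ does not depend on $\hat b$. This is a textbook $L^2$ projection whose minimizer over measurable maps is the conditional expectation $\E[\dot I_t\mid I_t=x]$, namely $b_t(x)$. To make this precise I would invoke the orthogonal decomposition $\E[|\hat b_t(I_t)-\dot I_t|^2]=\E[|\hat b_t(I_t)-b_t(I_t)|^2]+\E[|b_t(I_t)-\dot I_t|^2]$, whose cross term vanishes because $\E[\dot I_t-b_t(I_t)\mid I_t]=0$; the first, nonnegative summand is zero precisely when $\hat b_t=b_t$ holds $\rho_t$-almost everywhere, which identifies the minimizer.

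The step I expect to be the main obstacle is supplying the regularity that legitimizes the interchanges in the first part: differentiating under the expectation needs a dominated-convergence argument controlled by integrability of $\dot I_t=\dot\alpha_t x_0+\dot\beta_t x_1$, and the integration by parts tacitly requires $b_t\rho_t$ to be locally integrable and to decay at infinity so that no boundary flux survives. Under the differentiability of $\alpha_t,\beta_t$ assumed in Definition~\ref{def:Deterministic_interpolant_and_Flow_Matching} together with mild moment conditions on $\rho_0$ and $\rho_1$, these hold, but they are where the real work sits; the variational part is then routine, modulo the caveat that $b_t$ is pinned down only up to a $\rho_t$-null set.
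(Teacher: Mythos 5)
Your proof is correct, but note that the paper never proves this theorem itself: it is stated as a preliminary and attributed to the flow-matching and stochastic-interpolant literature (the appendix restates the more general version with a score field, again without proof). Your argument—the weak formulation with a test function, the tower property $\E[\nabla\phi(I_t)\cdot\dot I_t]=\E[\nabla\phi(I_t)\cdot b_t(I_t)]$, and integration by parts for the continuity equation, followed by completing the square and the $L^2$-projection characterization of conditional expectation for the variational claim—is exactly the standard derivation found in those cited sources, and your closing caveats (differentiation under the expectation, decay/local integrability of $b_t\rho_t$, and identification of $b_t$ only up to $\rho_t$-null sets) correctly locate the only genuine technical content.
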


\begin{corollary}[Probability flow ODE]
The transport continuity equation implies that the solution $X_t$ to the probability flow ODE~\cite{song2023consistency, ma2024sit}:
\begin{equation}
\label{eq:pf_ode}
\dot X_t \;=\; b_t\!\left(X_t\right)
\end{equation}
matches the interpolant law: if $X_{t=0}\!\sim\!\rho_0$, then $X_{t=1}\!\sim\!\rho_1$. Hence generative sampling is obtained by drawing $x_0\!\sim\!\rho_0$ and integrating \eqref{eq:pf_ode} from $t{=}0$ to $t{=}1$.
\end{corollary}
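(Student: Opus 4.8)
The plan is to show that the law of the ODE solution $X_t$ and the interpolant density $\rho_t$ satisfy the same first-order transport PDE with identical initial data, and then to invoke uniqueness to identify them. First I would fix the random initial condition $X_{t=0}\sim\rho_0$, let $\Phi_t$ denote the flow map associated with the ODE $\dot X_t = b_t(X_t)$, and write $\mu_t$ for the law of $X_t=\Phi_t(X_0)$; by construction $\mu_{t=0}=\rho_0$.

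The key computation is to derive a continuity equation for $\mu_t$ via its weak formulation. For any smooth, compactly supported test function $\phi$, differentiating under the expectation and substituting the ODE gives
\begin{equation}
\frac{\mathrm{d}}{\mathrm{d}t}\,\mathbb{E}\big[\phi(X_t)\big]
= \mathbb{E}\big[\nabla\phi(X_t)\cdot b_t(X_t)\big]
= \int \nabla\phi(x)\cdot b_t(x)\,\mu_t(x)\,\mathrm{d}x.
\end{equation}
Rewriting the left side as $\int \phi\,\partial_t\mu_t\,\mathrm{d}x$ and integrating the right side by parts yields $\int \phi\,\big(\partial_t\mu_t+\nabla\cdot(b_t\mu_t)\big)\,\mathrm{d}x=0$. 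Since $\phi$ is arbitrary, $\mu_t$ solves
\begin{equation}
\partial_t\mu_t(x)+\nabla\cdot\big(b_t(x)\,\mu_t(x)\big)=0,
\end{equation}
which is exactly the transport continuity equation \eqref{eq:transportEquation} already established for $\rho_t$.

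At this point both $\mu_t$ and $\rho_t$ solve the same linear continuity equation driven by the common velocity field $b_t$ and share the initial condition $\mu_{t=0}=\rho_{t=0}=\rho_0$. Invoking uniqueness of solutions to the continuity equation then forces $\mu_t=\rho_t$ for all $t\in[0,1]$; evaluating at $t=1$ gives $\mathrm{Law}(X_{t=1})=\mu_1=\rho_1$, which is the claim and justifies the stated sampling procedure.

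I expect the main obstacle to be the regularity needed to make the two appeals rigorous: existence and uniqueness of the flow $\Phi_t$ and, more importantly, uniqueness for the continuity equation both require $b_t$ to be well behaved in $x$ (e.g.\ locally Lipschitz with suitable growth control, or, in the weak regime, the DiPerna--Lions conditions of Sobolev regularity together with bounded divergence). Under the smoothness tacitly assumed for the learned velocity field these hypotheses hold, so the identification $\mu_t=\rho_t$ is legitimate; the remaining steps, namely differentiation under the expectation and the integration by parts, are routine once integrability of $b_t$ against $\mu_t$ is granted.
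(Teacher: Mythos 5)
Your proof is correct, and it is the canonical argument: show that the pushforward law $\mu_t$ of the ODE solution satisfies the same continuity equation as $\rho_t$ via the weak formulation, then invoke uniqueness to identify $\mu_t=\rho_t$. Note that the paper itself does not prove this corollary at all --- it is stated in the Preliminaries (and restated in Appendix \S\ref{appendix:SI}) as a recalled result from the stochastic interpolant literature of Albergo et al., and your derivation is exactly the one given in those references, including the regularity caveats on $b_t(x)=\mathbb{E}[\dot I_t \mid I_t=x]$ (a conditional expectation that need not be Lipschitz without further assumptions) that you correctly flag as the only non-routine point.
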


\paragraph{Remark (Generalization to \emph{stochastic interpolant}).}
The Appendix~\S\ref{appendix:SI} introduces the full \emph{stochastic interpolant}
$I_t=\alpha_t x_0+\beta_t x_1+\gamma_t z$ with $z\!\sim\!\mathcal N(0,I_d)$, \cite{albergo2023stochastic, albergo2022building}
which recovers diffusion-type models, the score field $s_t(x)=\nabla \log \rho_t(x)$, and the companion objective for $s_t$.
All transport-equation statements above remain valid, with \eqref{eq:b}–\eqref{eq:pf_ode} appearing as the $\gamma_t\equiv 0$ case used throughout the main text.

\begin{definition}[Transport cost] \label{definition:Transport cost}
Let $X_t(x_0)$ be the solution to the probability flow ODE \eqref{eq:pf_ode} for the initial condition $X_{t=0}(x_0) = x_0 \sim \rho_0$. Then the following inequality holds:
\begin{equation}
\label{eq:transportation}
\begin{aligned}
    \mathbb{E}_{x_0 \sim \rho_0} \left[ |X_{t=1}(x_0) - x_0|^2 \right] \leq \int_0^1 \mathbb{E}[|\dot{I}_t|^2] dt < \infty.
\end{aligned}
\end{equation}
Minimizing the left-hand side of \eqref{eq:transportation} would achieve the optimal transport in the sense of Benamou-Brenier~\cite{benamou2000computational}, and the minimum would give the Wasserstein-2 distance between $\rho_0$ and $\rho_1$~\cite{albergo2023stochasticcouplings}.
\end{definition}

While for a common \emph{stochastic interpolant}, this transport cost construction was made using the choice $\rho(x_0, x_1) = \rho_0(x_0) \rho_1(x_1)$, so that $x_0$ and $x_1$ were drawn independently from the base and the target, \textbf{data-dependent coupling} constructs the joint distribution by $\rho(x_0, x_1) = \rho_0(x_0 \mid x_1) \rho_1(x_1)$ to reduce the transport cost~\cite{albergo2023stochasticcouplings}, such that:
\begin{equation}
\label{eq:datacoupling_reduce_transportation}
\begin{aligned}
\int_{\mathbb{R}^{3d}} |\dot{I}_t|^2 \rho(x_0, x_1) \rho_z(z) \, dx_0 dx_1 dz
\leq \int_{\mathbb{R}^{3d}} |\dot{I}_t|^2 \rho_0(x_0) \rho_1(x_1) \rho_z(z) \, dx_0 dx_1 dz,
\end{aligned}
\end{equation}
The bound on the transportation cost in \eqref{eq:transportation} is more tightly controlled by the construction of data-dependent couplings~\cite{albergo2023stochasticcouplings}.

\section{Methodology}

In this section, we introduce a unified multi-stage generative algorithm designed to efficiently produce high-resolution images without relying on a pretrained VAE, thereby mitigating information and detail loss while enabling end-to-end training.
The algorithm proceeds through a sequence of stages, where the resolution is progressively refined in a coarse-to-fine manner.
At each stage, the output of the previous stage is further enhanced toward a finer resolution target using \emph{stochastic interpolant} with \textbf{conditionally dependent coupling}, thus forming a cascaded architecture and guaranteeing accurate data distribution modeling at each stage. Crucially, all stages are parameterized by a single unified DiT~\cite{peebles2023dit}, which facilitates knowledge sharing and supports end-to-end optimization.
Under the \emph{stochastic interpolant} framework, the proposed method is naturally compatible with Flow models (ODEs) as well as Diffusion models (SDEs).
For clarity, we focus here on deterministic interpolants (Flow/ODEs), while noting that the method extends readily to stochastic interpolants (includes Diffusion/SDEs), see \hyperref[sec:preliminary]{Preliminary} for details.

\subsection{Notations}

\textbf{Data and Resolution Hierarchy.}
Let the target high-resolution image distribution be defined on $\mathbb{R}^{d_K}$. We define a hierarchy of resolutions, where $k \in \{1, \dots, K\}$ represents the generation stage:
\ding{182} $x^{(K)} \sim p_{\text{data}}$: A sample from the true high-resolution data distribution.
\ding{183} $x_1^{(k)} \in \mathbb{R}^{d_k}$: A ground-truth image sample at stage $k$, which is generated by downsampling the original high-resolution image.

\textbf{Upsampling and Downsampling Operators.}
We define operators for changing image resolutions. Let the scaling factor between stage $k-1$ and $k$ be a power of 2: $d_k = 2^{k-1} d_1$, for simplicity.
\begin{itemize}[left=1em]
    \item \textit{Downsampling Operator $D_k$}: This operator maps an image from resolution $d_k$ to $d_{k-1}$ ($D_k: \mathbb{R}^{d_k} \to \mathbb{R}^{d_{k-1}}$). It operates via neighborhood averaging, where a block of pixels in the higher-resolution image is averaged to a single pixel in the lower-resolution image.
    \item \textit{Upsampling Operator $U_k$}: This operator maps an image from resolution $d_{k-1}$ to $d_k$ ($U_k: \mathbb{R}^{d_{k-1}} \to \mathbb{R}^{d_k}$). It operates via neighborhood replication, where each pixel from the lower-resolution image is copied to form a block in the higher-resolution image.
\end{itemize}
These operators are defined such that they are transitive: $D_{k+1 \to k-1} = D_k \circ D_{k+1}$. We can define a composite downsampling operator $D_{K \to k}$ that maps from the highest resolution $d_K$ directly to $d_k$. The ground-truth sample for any stage $k$ is thus defined as $x_1^{(k)} = D_{K \to k}(x^{(K)}) \sim \rho_1^{(k)}(x_1^{(k)} | x^{(K)})$.

\subsection{Multi-Stage Flow Matching Model}\label{sec:multistage}

To implement the unified multi-stage generation process, we define a single unified generative model(Flow/ODEs or Diffusion/SDEs) to transform from the source distribution to the target at each stage $k$ with \textbf{conditional dependent coupling}. 
The $K$ stages are designed as below:
\ding{182} \textbf{Stage $k = 1$: Noise-to-Image Generation.} The first stage generates a low-resolution image from a simple source distribution (e.g., Gaussian noise).
\ding{183} \textbf{Stage $k > 1$: Image-to-Image Refinement.}
In subsequent stages, the resolution is progressively refined by leveraging \textbf{conditionally dependent coupling}, where the probability path is explicitly conditioned on the relationship between the low-resolution source and the high-resolution target images. This cascaded approach breaks down the complex task of high-resolution image generation into a series of more manageable sub-problems. We will elaborate on the \textbf{motivation} and \textbf{benefits} of the design at the end of this section.

\paragraph{Stage $k=1$: Noise-to-Image Generation}
The first stage ($k=1$) of our model is responsible for generating the lowest-resolution image from a simple prior distribution. The model is defined by its source and target distributions and the \textbf{conditional dependent coupling} between them. 
For this foundational stage, the joint distribution is the product of the marginals: 
\begin{equation}
\label{eq:indepednet_coupling}
\begin{aligned}
\rho^{(1)}(x_0^{(1)}, x_1^{(1)} | x^{(K)})
&= \rho_0^{(1)}(x_0^{(1)} | x_1^{(1)}) \rho_1^{(1)}(x_1^{(1)} | x^{(K)}) \\
&= \rho_0^{(1)}(x_0^{(1)}) \rho_1^{(1)}(x_1^{(1)} | x^{(K)})
\end{aligned}
\end{equation}
The \textbf{target distribution} is the empirical distribution of the lowest-resolution ground-truth images, where $x_1^{(1)} \sim \rho_1^{(1)}(x_1^{(1)} | x^{(K)})$.
The \textbf{source distribution} is a standard Gaussian distribution denoted as $x_0^{(1)} \sim \rho_0^{(1)}(x_0^{(1)}) = \mathcal{N}(0, \sigma^2 I_{d_1})$, which is independent to $\rho_1^{(1)}(x_1^{(1)} | x^{(K)})$, where can also drive that $\rho_0^{(1)}(x_0^{(1)} | x_1^{(1)}) = \rho_0^{(1)}(x_0^{(1)})$ from \eqref{eq:indepednet_coupling}.

\paragraph{Stage $k > 1$: Image-to-Image Refinement}
For all subsequent stages ($k > 1$), the model learns to perform a coarse-to-fine resolution enhancement. The core of these stages is the \textbf{conditional dependent coupling} that leverages the structural similarity between the lower-resolution source image and the higher-resolution target image.
For each stage $k$, the \textbf{target distribution} is the distribution of ground-truth images $x_1^{(k)} \sim \rho_1^{(k)}(x_1^{(k)} | x^{(K)})$ at resolution $d_k$. Instead of drawing from a simple prior, the \textbf{source sample} $x_0^{(k)}$ is constructed directly from the corresponding target sample $x_1^{(k)}$: $x_0^{(k)} \sim \rho_0^{(k)}(x_0^{(k)} | x_1^{(k)})$. This is achieved by first applying a deterministic mapping $m_k(x_1^{(k)}) = U_k(D_k(x_1^{(k)}))$ and then adding a small amount of Gaussian noise to ensure the conditional probability is well-defined:
\begin{equation}
\label{eq:data_dependent_coupling_compute}
x_0^{(k)} = m_k(x_1^{(k)}) + \sigma_k \zeta_k = U_k(D_k(x_1^{(k)})) + \sigma_k \zeta_k
\end{equation}
where $\zeta_k \sim \mathcal{N}(0, I_{d_k})$ and $\sigma_k$ is a small, stage-dependent noise level, which can be tuned. This construction induces a conditional probability $\rho_0^{(k)}(x_0^{(k)} | x_1^{(k)})$ and a joint probability density $\rho^{(k)}(x_0^{(k)}, x_1^{(k)} | x^{(K)})$:
\begin{equation}
\label{eq:data_dependent_coupling}
\begin{aligned}
\rho_0^{(k)}(x_0^{(k)} | x_1^{(k)}) &= \mathcal{N}(x_0^{(k)}; m_k(x_1^{(k)}), \sigma_k^2 I_{d_k})
\end{aligned}
\end{equation}
\begin{equation}
\label{eq:data_dependent_coupling_joint_distribution}
\begin{aligned}
\rho^{(k)}(x_0^{(k)}, x_1^{(k)} | x^{(K)}) &= \rho_0^{(k)}(x_0^{(k)} | x_1^{(k)}) \rho_1^{(k)}(x_1^{(k)} | x^{(K)})
\end{aligned}
\end{equation}

\paragraph{Model Parameterization}
We design a unified DiT model that operates consistently across multiple resolutions by sharing a single set of parameters. To enable the model to recognize and differentiate between these resolutions, we introduce an additional resolution embedding. Specifically, we treat the absolute resolution of the feature map $r_k$ obtained after patch embedding~\cite{dosovitskiy2020image} as a conditional signal. This signal is then encoded using sinusoidal positional embeddings~\cite{vaswani2017attention} $e_k = E(r_k)$ and fused with the timestep embedding by the cross-attention mechanism~\cite{vaswani2017attention} before being fed into the model.

To unify the multi-stage generation, the rescaled timestep strategy~\cite{chen2025pixelflow, jin2024pyramidal, yan2024perflow} is employed to align the multi-stage generation time definition with the standard Flow model~\cite{lipman2022flow}, which generates across the time scope $t \in [0, 1]$.
For each stage $k$, the start time point and end time point are defined as $t^k_0$ and $t^k_1$, where $0 \leq t^k_0<t^k_1 \leq 1$, $t^k_0 = t^{k-1}_1$, and meet boundary condition $t^1_0=0$, $t^K_1=1$. For simplicity, we construct a linear interpolant by specifying $\alpha_t = 1-\tau$ and $\beta_t = \tau$ in Definition~\ref{def:Deterministic_interpolant_and_Flow_Matching}:
\begin{equation}
I^{(k)}_\tau \;=\; (1-\tau) \cdot x^{(k)}_0 + \tau \cdot x^{(k)}_1,
\end{equation}
where $\tau = \frac{t - t^k_0}{t^k_1 - t^k_0}$, so that timestep $t \in [0, 1]$ of the entire multi-stage generation process located in each stage $k$: $t \in [t^k_0, t^k_1]$ is rescaled to $\tau \in [0, 1]$ within each stage $k$.

The single unified DiT is parameterized by $b^{\theta}$ to model all generation stages. For the deterministic interpolant (Flow/ODEs), derived from \eqref{eq:b_objective}, the DiT is optimized by:
\begin{equation}
\mathcal{L}(\theta) = \mathbb{E}_{t \sim [0,1], k, e_k, (x_0^{(k)}, x_1^{(k)}) \sim \rho^{(k)}(x_0^{(k)}, x_1^{(k)} | x^{(K)})} \left[ \left| b^{\theta}(I_\tau^{(k)}, \tau, e_k) \right|^2 - 2\dot{I}_\tau^{(k)} \cdot b^{\theta}(I_\tau^{(k)}, \tau, e_k) \right]
\end{equation}
where $k = \operatorname*{arg}_{\,k}\{\, t \in [t^k_0, t^k_1] \}$, and $\dot{I}_\tau^{(k)} = x_1^{(k)} - x_0^{(k)}$.

\subsection{Conditional Dependent Coupling
} \label{sec:conditionalDepCoupling}

For the multi-stage generation model proposed in Sec.~\ref{sec:multistage}, we employed the \textbf{conditional dependent coupling} strategy to unify them. 
Instead of mapping unstructured noise to an image, our approach maps a structured prior---the upsampled low-resolution image from the previous stage with added stage-dependent noise---to the target high-resolution distribution. 
The joint probability distribution of the entire multi-stage interpolant with \textbf{conditional dependent coupling} is expressed as:
\begin{equation}
\label{eq:conditional_data-dependent_coupling_joint_distribution_training}
\begin{aligned}
\rho(x_1^{(K)}, x_0^{(K)}, x_1^{(K-1)}, \dots, x_0^{(2)}, x_1^{(1)}, x_0^{(1)})
&= \rho(x^{(K)}) \prod_{k=1}^K \rho^{(k)}(x_0^{(k)}, x_1^{(k)} | x^{(K)}) \\
&= \rho(x^{(K)}) \prod_{k=1}^K \rho_0^{(k)}(x_0^{(k)} | x_1^{(k)}) \rho_1^{(k)}(x_1^{(k)} | x^{(K)}),
\end{aligned}
\end{equation}
This formulation reveals a key property of our model. Conditioned on the final high-resolution image $x^{(K)}$, the data coupling at any given stage $k$, denoted as $\rho^{(k)}(x_0^{(k)}, x_1^{(k)} | x^{(K)})$, is independent of that at any other stage $k^* \neq k$:
\begin{equation}
\begin{aligned}
\rho^{(k)}(x_0^{(k)}, x_1^{(k)} | x^{(K)}) \perp\!\!\!\perp \rho^{(k^*)}(x_0^{(k^*)}, x_1^{(k^*)} | x^{(K)}),
\end{aligned}
\end{equation}

\paragraph{Sequential dependency as Markov chain}
This conditional independence establishes a foundational \textbf{Markov chain}~\cite{norris1998markov} structure across the stages. Consequently, the generative process at inference time proceeds sequentially as follows:
\begin{equation}
\label{eq:inference_markov}
\begin{aligned}
\rho(\hat{x}_0^{(1)}, \hat{x}_1^{(1)}, \hat{x}_0^{(2)}, \dots,  \hat{x}_1^{(K-1)}, \hat{x}_0^{(K)}, \hat{x}_1^{(K)})
&= \rho^{(1)}(\hat{x}_0^{(1)}, \hat{x}_1^{(1)}) \prod_{k=2}^K \rho^{(k)}(\hat{x}_0^{(k)}, \hat{x}_1^{(k)} \mid \hat{x}_1^{(k-1)}) \\
= \rho_0^{(1)}(\hat{x}_0^{(1)}) \rho_1^{(1)}(\hat{x}_1^{(1)} &\mid \hat{x}_0^{(1)}) \prod_{k=2}^K \rho_1^{(k)}(\hat{x}_1^{(k)} \mid \hat{x}_0^{(k)}) \rho_0^{(k)}(\hat{x}_0^{(k)} \mid \hat{x}_1^{(k-1)}),
\end{aligned}
\end{equation}
Here, the initial prior $\rho_0^{(1)}(\hat{x}_0^{(1)})$ is a standard normal distribution, $\mathcal{N}(\hat{x}_0^{(1)}; 0, I_{d_1})$. The conditional distributions $\rho_1^{(k)}(\hat{x}_1^{(k)} \mid \hat{x}_0^{(k)})$ for all stages $k$ are parameterized by a single unified generative model $b^\theta$. For the refinement stages ($k \geq 2$), the conditional prior $\rho_0^{(k)}(\hat{x}_0^{(k)} \mid \hat{x}_1^{(k-1)})$ is defined by the deterministic upsampling step with added Gaussian noise:
\begin{equation}
\begin{aligned}
\rho_0^{(k)}(\hat{x}_0^{(k)} \mid \hat{x}_1^{(k-1)}) = \mathcal{N}(\hat{x}_0^{(k)}; U_k(\hat{x}_1^{(k-1)}), \sigma_k^2 I_{d_k}),
\end{aligned}
\end{equation}
which is implemented by sampling $\hat{x}_0^{(k)} = U_k(\hat{x}_1^{(k-1)}) + \sigma_k \zeta_k$, akin with the distribution formed by \eqref{eq:data_dependent_coupling_compute}.
In contrast to previous autoregressive modeling methods that severely limited by the computational complexity arising from the full-resolution long-history condition~\cite{tian2024var, renflowar}, and potential complex token/feature pyramid construction~\cite{jiao2025flexvar}, the proposed generative process begins with a sample from the simple prior: $\hat{x}_0^{(1)} \sim \rho_0^{(1)}(\hat{x}_0^{(1)})$, subsequently, the output of each stage $\hat{x}_1^{(k-1)}$ serves as the basis for the input to the next stage $\hat{x}_0^{(k)}$, forming the sequential dependency characteristic of a Markov chain, as shown in \eqref{eq:inference_markov}.

\paragraph{Reducing transport cost}
\textbf{Conditional dependent coupling} strategy integrating each stage and thus decreasing transport cost $L=\int_0^1 \mathbb{E}[|\dot{I}_t|^2] dt < \infty$ as defined in Definition~\ref{definition:Transport cost}, see in Theorem~\ref{theorem:lowerTransportCost}.
\begin{theorem} \label{theorem:lowerTransportCost}
The transport cost of the naive single-stage model,
which generates a high-resolution image $\hat{x}_1 \sim x^{(K)} \in \mathbb{R}^{d_K}$ directly from Gaussian noise $x_0 \sim \mathcal{N}(0, \sigma^2 I_{d_K})$ (denoted as $L_A$), is greater than that of the proposed multi-stage model with conditional dependent coupling strategy, which generates a high-resolution image $\hat{x}_1^{(K)} \sim x^{(K)} \in \mathbb{R}^{d_K}$ from Gaussian noise $x_0^{(1)} \sim \mathcal{N}(0, \sigma^2 I_{d_1})$ through $K$ stages (denoted as $L_B$). Specifically, we have $L_A > L_B$. 
\begin{equation}
    L_A = \mathbb{E}[|x_1|^2] + \mathbb{E}[|x_0|^2] \; > \; L_B = \sum_{k=1}^{K} L_k,
\end{equation}
where $L_k$ denotes the transport cost at the $k$-th stage. See proof in Appendix~\S\ref{pf:transportcost}.
\end{theorem}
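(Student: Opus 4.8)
The plan is to reduce every transport cost to a single expectation of a squared velocity and then compare the resulting signal and noise contributions term by term. Because each stage uses the linear interpolant $I_\tau=(1-\tau)x_0+\tau x_1$, the velocity $\dot I_\tau=x_1-x_0$ is constant in $\tau$, so the time integral in Definition~\ref{definition:Transport cost} collapses and every transport cost equals $\mathbb{E}[|x_1-x_0|^2]$. For the single-stage model the coupling is independent and $x_0\sim\mathcal N(0,\sigma^2 I_{d_K})$ is zero-mean, so the cross term vanishes and $L_A=\mathbb{E}[|x^{(K)}|^2]+\sigma^2 d_K$. The same computation gives $L_1=\mathbb{E}[|x_1^{(1)}|^2]+\sigma^2 d_1$ at the noise-to-image stage, while at each refinement stage $k>1$, substituting $x_0^{(k)}=m_k(x_1^{(k)})+\sigma_k\zeta_k$ from~\eqref{eq:data_dependent_coupling_compute} and using that $\zeta_k$ is zero-mean and independent of $x_1^{(k)}$ yields $L_k=R_k+\sigma_k^2 d_k$, where $R_k:=\mathbb{E}[|x_1^{(k)}-m_k(x_1^{(k)})|^2]$ is the high-frequency residual energy discarded by a downsample--upsample round trip.

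The crux is a projection identity for the map $m_k=U_k\circ D_k$. Writing block-averaging as $D_k=\tfrac{1}{n_k}S$ with $S$ the $0/1$ block-sum matrix and replication as $U_k=S^{\top}$ (block size $n_k=d_k/d_{k-1}$), one checks $SS^{\top}=n_k I$ and $D_kU_k=I$, hence $m_k=\tfrac{1}{n_k}S^{\top}S$ is symmetric and idempotent, i.e.\ an orthogonal projection. This gives the Pythagorean split $|x_1^{(k)}|^2=|m_k(x_1^{(k)})|^2+|x_1^{(k)}-m_k(x_1^{(k)})|^2$ together with the energy-scaling relation $|m_k(x_1^{(k)})|^2=n_k\,|x_1^{(k-1)}|^2$, where I use the transitivity $D_k(x_1^{(k)})=x_1^{(k-1)}$. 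Taking expectations gives the recursion $\mathbb{E}[|x_1^{(k)}|^2]=n_k\,\mathbb{E}[|x_1^{(k-1)}|^2]+R_k$, which I iterate from $k=1$ to $K$; since the block sizes telescope as $\prod_{j=2}^{K}n_j=d_K/d_1$, this yields
\begin{equation}
\mathbb{E}[|x^{(K)}|^2]=\frac{d_K}{d_1}\,\mathbb{E}[|x_1^{(1)}|^2]+\sum_{k=2}^{K}\frac{d_K}{d_k}\,R_k .
\end{equation}

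Because every coefficient $d_K/d_k\ge 1$ (strictly $>1$ whenever $k<K$), this identity immediately gives the signal dominance $\mathbb{E}[|x^{(K)}|^2]\ge \mathbb{E}[|x_1^{(1)}|^2]+\sum_{k=2}^{K}R_k$ with a strictly positive gap $\Delta:=(d_K/d_1-1)\mathbb{E}[|x_1^{(1)}|^2]+\sum_{k=2}^{K}(d_K/d_k-1)R_k>0$ for $K>1$. Subtracting the two costs then leaves $L_A-L_B\ge \Delta+\sigma^2(d_K-d_1)-\sum_{k=2}^{K}\sigma_k^2 d_k$, so $L_A>L_B$ follows as soon as the tunable refinement noise levels satisfy $\sum_{k=2}^{K}\sigma_k^2 d_k<\Delta+\sigma^2(d_K-d_1)$; in particular the inequality is unconditional in the small-noise regime $\sigma_k\to 0$.

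I expect the main obstacle to be twofold. First, the projection lemma must be stated carefully: it is only because averaging is paired with the matched replication operator that $m_k$ is self-adjoint and idempotent, which is exactly what makes the Pythagorean decomposition—and hence the clean per-stage energy accounting—valid, whereas a generic downsample/upsample pair would leave a cross term that breaks the telescoping. Second, the additive-noise bookkeeping cuts the wrong way: since $\sum_{k=1}^{K}d_k>d_K$, equal per-stage noise would make the multi-stage noise budget exceed the single-stage one, so the strict inequality genuinely relies on the signal gap $\Delta$ dominating the refinement noise, i.e.\ on the ``small $\sigma_k$'' hypothesis the construction already assumes.
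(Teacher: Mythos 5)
Your proposal is correct and follows the same overall skeleton as the paper's proof (collapse each cost to $\mathbb{E}[|x_1-x_0|^2]$, split each stage into a signal residual plus an isotropic noise term, then compare), but it diverges at the key technical step, and in a way that is actually more rigorous than the paper. The paper postulates orthogonality of the detail vectors $\tilde{\epsilon}^{(k)}$ embedded in the full-resolution space $\mathbb{R}^{d_K}$ as an \emph{assumption}, identifies each per-stage residual $\mathbb{E}[|x_1^{(k)}-U_k(D_k(x_1^{(k)}))|^2]$ with $\mathbb{E}[|\tilde{\epsilon}^{(k)}|^2]$ only up to ``$\approx$'' signs, and concludes that the signal energies cancel exactly between $L_A$ and $L_B$, reducing everything to a noise comparison under the schedule $d_k=2^{k-1}d_1$, $\sigma_k=2^{-(k-1)}\sigma$. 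You instead \emph{prove} that $m_k=U_k\circ D_k$ is an orthogonal projection and derive the exact recursion $\mathbb{E}[|x_1^{(k)}|^2]=n_k\,\mathbb{E}[|x_1^{(k-1)}|^2]+R_k$; this keeps the resolution-scaling factors that the paper's approximation silently drops (replication multiplies energy: $\mathbb{E}[|\tilde{\epsilon}^{(k)}|^2]=(d_K/d_k)R_k$, so the paper's residual identification is off by exactly that factor). As a result, in your accounting the signal terms do \emph{not} cancel but leave a strictly positive gap $\Delta$, which only strengthens the inequality; what each approach buys is clear: the paper gets a short closed-form expression for $L_A-L_B$ at the price of hidden approximations, while you get an exact inequality plus an explicit, honest sufficient condition on the noise budget, correctly flagging that with equal per-stage noise the comparison would fail. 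The one loose end is that you state the conclusion conditionally rather than closing it for the model actually proposed: plugging the paper's schedule into your condition gives
\begin{equation}
\sum_{k=2}^{K}\sigma_k^2 d_k=\sigma^2 d_1\sum_{k=2}^{K}2^{-(k-1)}=\sigma^2 d_1\bigl(1-2^{-(K-1)}\bigr)<\sigma^2 d_1\le\sigma^2 d_1\bigl(2^{K-1}-1\bigr)=\sigma^2(d_K-d_1)
\end{equation}
for all $K\ge 2$, so your sufficient condition holds even before invoking $\Delta$, and $L_A>L_B$ follows unconditionally; adding this one line makes your proof complete.
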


\paragraph{Accurate distribution learning}
In contrast to existing unified multi-stage generation methods~\cite{chen2025pixelflow, jin2024pyramidal}, the proposed method guarantees that the model $b^\theta$ learns the accurate data distribution $\rho_1^{(k)}(x_1^{(k)} \mid x^{(K)})$ at each stage. This, in turn, enables the unified generative model to capture the target high-resolution image distribution $x^{(K)} \sim p_{\text{data}}$ by progressively matching $\rho_1^{(k)}(x_1^{(k)} \mid x^{(K)})$ at every stage $k$. As an example, consider conditional generation: $b^{\theta}(I_\tau^{(k)}, \tau, e_k)$ models stage-$k$ generation by solving the ODE $\dot X^{k}_\tau = b^{\theta}(I_\tau^{(k)}, \tau, e_k)$, where $b^{\theta}(I_\tau^{(k)}, \tau, e_k) = \mathbb{E}_\textbf{c} \left[ b^{\theta}(I_\tau^{(k)}, \tau, e_k, \textbf{c}) \right]$ and $\textbf{c}$ denotes the conditioning signal (e.g., class label, text prompt). Conditional generation toward conditions $\textbf{c}_1$ and $\textbf{c}_2$ within stage $k$ is given by
\begin{equation}
\label{eq:conditional_generation_int}
I_{1,\textbf{c}_1}^{(k)} = I_\tau^{(k)} + \int_{\tau}^{1} b^{\theta}(I_t^{(k)}, t, e_k, \textbf{c}_1) \mathrm{d}t , \quad \quad
I_{1,\textbf{c}_2}^{(k)} = I_\tau^{(k)} + \int_{\tau}^{1} b^{\theta}(I_t^{(k)}, t, e_k, \textbf{c}_2) \mathrm{d}t ,
\end{equation}
where $I_\tau^{(k)}$ denotes the current sample at the rescaled timestep $\tau \in [0,1]$ within stage $k$, and specifically $I_0^{(k)}$ denotes the starting point of stage $k$.

\vspace{-0.8em}
\begin{figure}[h]
    \centering
    \includegraphics[width=0.496\linewidth]{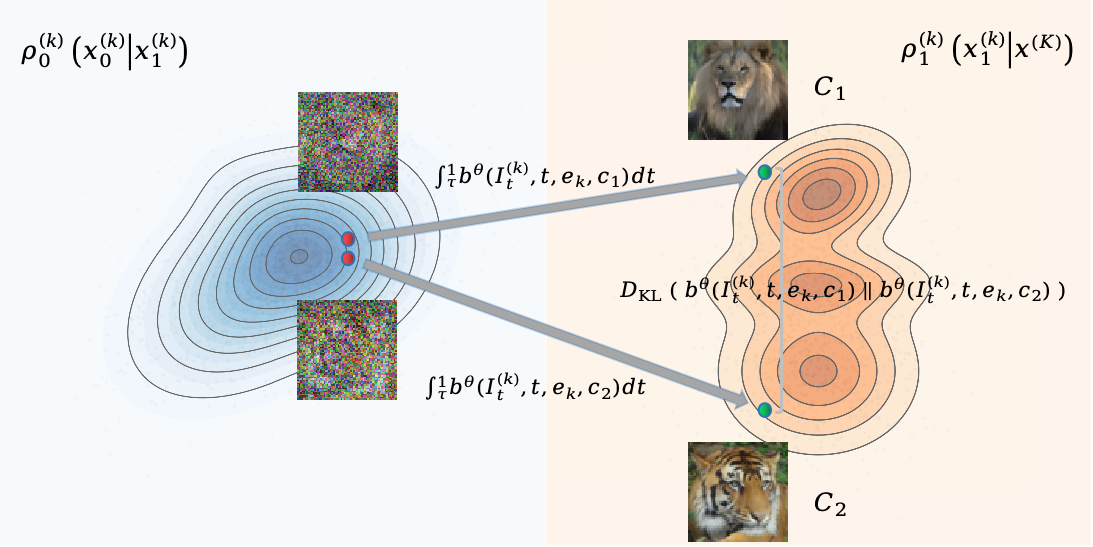}
    \includegraphics[width=0.496\linewidth]{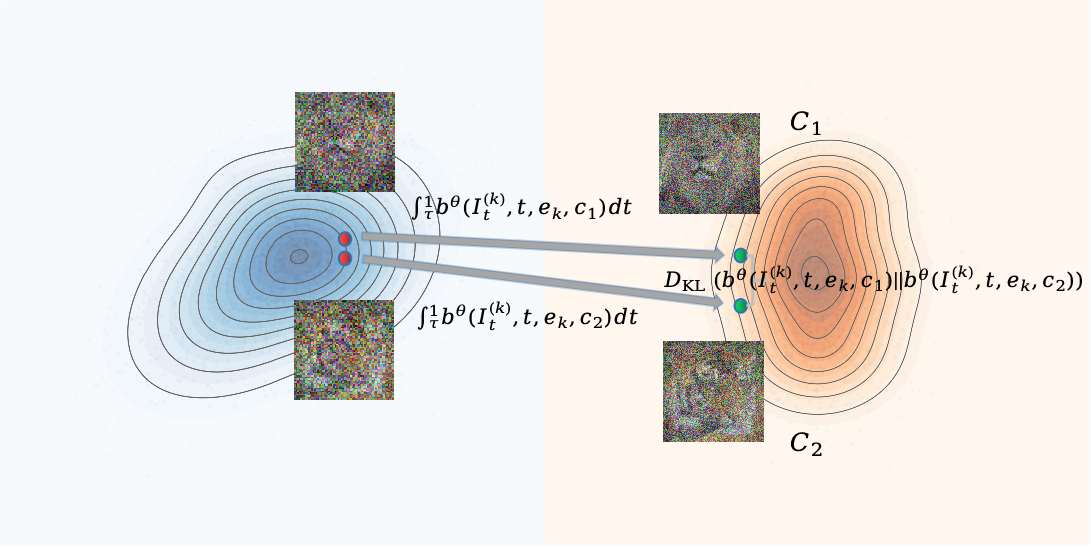}
    \vspace{-0.5em}
    \caption{Data Distribution Learning and Transition.}
    \label{fig:distribution}
\end{figure}
\vspace{-1em}

In the initial and intermediate stages, $I_\tau^{(k)}$ approximately follows a standard Gaussian—structureless and semantically vague—due to low resolution and high noise levels, and is therefore not characterizable. Consequently, we assume that the current sample $I_\tau^{(k)}$ used to generate $I_{1,\textbf{c}_1}^{(k)}$ and $I_{1,\textbf{c}_2}^{(k)}$ is identical within an initial or intermediate stage $k$; see the red point in Figure~\ref{fig:distribution}.

For previous multi-stage methods~\cite{chen2025pixelflow, jin2024pyramidal} that maintain noise in the target across stage $k<K$, the initial and intermediate stages necessarily target structureless, semantically vague data (close to standard Gaussian noise) due to low resolution and high noise levels. As a result, the model $b^\theta$ cannot learn accurate condition-specific distributions for $\textbf{c}_1$ and $\textbf{c}_2$, as indicated by a small divergence $D_{\mathrm{KL}}\Bigl( b^{\theta}(I_t^{(k)}, t, e_k, \textbf{c}_1)\big\| b^{\theta}(I_t^{(k)}, t, e_k, \textbf{c}_2) \Bigr)$; see the right part of Figure~\ref{fig:distribution}.

However, under the proposed training strategy with \textbf{conditional dependent coupling}, which treats the accurate data distribution $\rho_1^{(k)}(x_1^{(k)} \mid x^{(K)}) = \mathbb{E}_\textbf{c} \left[ \rho_1^{(k)}(x_1^{(k)} \mid x^{(K)}, \textbf{c}) \right]$ as the target at each stage—where $\rho_1^{(k)}(x_1^{(k)} \mid x^{(K)}, \textbf{c})$ denotes the condition-specific distribution of condition $\textbf{c}$—the multi-stage generator $b^\theta$ is guaranteed to learn the information and details pertinent to stage $k$ by transforming the source distribution $\rho_0^{(k)}(x_0^{(k)} \mid x_1^{(k)})$ into the accurate target distribution $\rho_1^{(k)}(x_1^{(k)} \mid x^{(K)})$ at stage $k$ for every condition $\textbf{c}$. This is reflected by an adequate divergence $D_{\mathrm{KL}}\Bigl( b^{\theta}(I_t^{(k)}, t, e_k, \textbf{c}_1)\big\| b^{\theta}(I_t^{(k)}, t, e_k, \textbf{c}_2) \Bigr)$; see the left part of Figure~\ref{fig:distribution}.

\subsection{Classifier-Free Guidance for Conditional Dependent Coupling} \label{sec:cfg}

Thanks to the accurate data distribution modeling at each stage $k$, Classifier-Free Guidance (CFG)~\cite{ho2022classifier} can be seamlessly integrated with \textbf{conditional dependent coupling}. This eliminates the need for a stage-wise CFG schedule~\cite{chen2025pixelflow,  kynkaanniemi2024applying}, which otherwise requires carefully assigning the CFG strength at every stage, thereby ensuring both elegance and ease of implementation. 
In the proposed method, a single CFG strength $S_\text{cfg}$ is applied uniformly across all stages $k$. Further details are provided in Appendix~\S\ref{ap:cfg}.

\subsection{Algorithm Workflows}

Here we present the detailed unconditional training and inference procedures for the unified multi-stage generative model $b^\theta$ with \textbf{conditional dependent coupling} in Appendix~\S\ref{ap:alg_uncondition}. The inference steps are described using the forward Euler method~\cite{lipman2024flow, lipman2022flow} for solving the ODE, as a simple example of a numerical solver. 
Moreover, the detailed training and inference procedures for modeling $b^\theta$ in conditional generation with CFG are provided in Appendix~\S\ref{ap:cfg}.

\section{Theoretical Justification and Key Property}

\textbf{Mathematical premise and intuition.} 
Our method adopts a unified multi-scale, coarse-to-fine image generation procedure that aligns with the natural way humans recognize images~\cite{}. Compared to previous coarse-to-fine AR approaches~\cite{tian2024visual,renflowar}, our method is based on a unified and continuous transformation between distributions in a highly controllable manner, which better reflects the smooth nature of image transitions.
Compared to existing multi-stage generation methods~\cite{ho2022cascaded}, our method models a unified multi-stage generation process with \textbf{conditional dependent coupling} and can be parameterized by a single DiT that can be trained from end-to-end. 
Furthermore, due to the accurate distribution modeling at each stage, our method is guaranteed to learn the target high-quality data distribution by progressively matching the data distribution at each stage. This resolves the generation problem step by step within a unified process.

\textbf{Diversity and stochasticity.}
In image generation, diversity—typically introduced by the stochasticity of generative models—is primarily expressed in the early stages (initial transition steps for ODE/SDE)~\cite{chen2025pixelflow, tian2024var}. For the proposed method, this corresponds to the transition modeled in the initial and intermediate stages. During these stages, key attributes like object presence, layout, and structure are determined. In contrast, the subsequent high-resolution refinement stages require less diversity. By employing the proposed \textbf{conditional dependent coupling} strategy, we explicitly decouple the diverse coarse image generation from the fine image refinement, thereby achieving both high performance and efficiency. Specifically, as the stage index \(k\) increases, we gradually reduce the stage-dependent noise level \(\sigma_k\) in \eqref{eq:data_dependent_coupling_compute}. This adjustment improves both training and inference efficiency while maintaining strong performance, defined as
\begin{equation}
\label{eq:sigma_pyramid}
\sigma_k = \gamma^{-(k-1)} \sigma, \quad \text{where  } \gamma \geq 1, \quad 2 \le k \le K.
\end{equation}
We refer to $\gamma$ as the "diminish factor" and further analyze its impact in our experiments.


\textbf{Few-step generation and Efficient inference.} 
Although not originally designed as such, the generative task modeled by the proposed unified multi-stage method with \textbf{conditional dependent coupling} is decomposed into subtasks within each generation stage. This decomposition regularizes the overall generation trajectory into multiple straighter sub-trajectories, thereby reducing the number of function evaluations (NFE) required at each stage~\cite{frans2024one, gengconsistency}. Moreover, the simulations in the early stages are substantially faster than those in the final stage, which targets the full-resolution image. 
Quantitatively, this approach leads to a significant reduction in the transport cost, as established in Theorem~\ref{theorem:lowerTransportCost}. Consequently, fewer steps are needed to obtain qualified results, and the overall inference time is shortened. 
Here we summarize as Theorem~\ref{theorem:lowerInfer} below.
\begin{theorem} \label{theorem:lowerInfer}
The inference time of the naive single-stage model,
which generates a high-resolution image $\hat{x}_1 \sim x^{(K)} \in \mathbb{R}^{d_K}$ directly from Gaussian noise $x_0 \sim \mathcal{N}(0, \sigma^2 I_{d_K})$ (denoted as $T_A$), is greater than that of the proposed multi-stage model with a conditional dependent coupling strategy, which generates a high-resolution image $\hat{x}_1^{(K)} \sim x^{(K)} \in \mathbb{R}^{d_K}$ from Gaussian noise $x^{(1)}_0 \sim \mathcal{N}(0, \sigma^2 I_{d_1})$ through $K$ stages (denoted as $T_B$). Specifically, we have $T_A > T_B$. 
\begin{equation}
    T_A = \mathbb{E}[\iota(x_0 \to \hat{x}_1)] \; > \; 
    T_B = \sum_{k=1}^{K} \mathbb{E}[\iota(x_0^{(k)} \to \hat{x}_1^{(k)})],
\end{equation}
where $\iota(\cdot)$ denotes the expected inference time cost of generating the target image from the input sample at each stage. See proof in Appendix~\S\ref{pf:fasterInfer}.
\end{theorem}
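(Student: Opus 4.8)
The plan is to factor each inference time into the number of function evaluations (NFE) required by the ODE solver and the per-evaluation cost of the unified DiT $b^\theta$, and then to control these two factors separately using the resolution hierarchy $d_k = 2^{k-1} d_1$ and the transport-cost bound of Theorem~\ref{theorem:lowerTransportCost}. Concretely, writing $N$ for the NFE and $c(d)$ for the wall-clock cost of a single forward pass at spatial resolution $d$, I would express $T_A = N_A\, c(d_K)$ for the single-stage model, which integrates its entire trajectory at the finest resolution $d_K$, and $T_B = \sum_{k=1}^{K} N_k\, c(d_k)$ for the multi-stage model, where $N_k$ is the step budget allotted to stage $k$. The target inequality $T_A > T_B$ then reduces to comparing these two quantities.

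First I would establish the per-step cost effect. Since the DiT processes a number of patch tokens that grows with $d_k$ and its self-attention is quadratic in the token count, $c(d)$ is strictly increasing in $d$; with $d_k < d_K$ for every $k < K$ this gives $c(d_k) \le c(d_K)$, strict for $k<K$. This alone already produces a gap: even granting the baseline and the cascade the \emph{same} total step count $\sum_k N_k = N_A$, one has $T_A - T_B = \sum_{k<K} N_k\bigl(c(d_K) - c(d_k)\bigr) > 0$ whenever any early stage carries positive NFE. Thus the coarse-to-fine schedule is strictly cheaper purely because most of the integration is performed at reduced resolution.

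Next I would control the step count via the stagewise transport cost, so that the cascade is not penalized by needing more total steps. The local truncation error of the forward Euler integrator on $\dot X_t = b_t(X_t)$ scales with the trajectory acceleration $|\ddot X_t|$, whose accumulation over $[0,1]$ is governed by $L = \int_0^1 \mathbb{E}[|\dot I_t|^2]\,dt$; hence the NFE needed to meet a fixed error tolerance is monotone in the transport cost. Theorem~\ref{theorem:lowerTransportCost} supplies $L_B = \sum_k L_k < L_A$, and, more usefully, that each $L_k$ is small because the stage-$k$ source and target differ only by an upsample–downsample residual plus noise of scale $\sigma_k$. Monotonicity then yields $\sum_k N_k \le N_A$. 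Combining both effects gives $T_B = \sum_k N_k\, c(d_k) \le c(d_K)\sum_k N_k \le c(d_K)\, N_A = T_A$, with the first inequality strict as soon as a stage $k<K$ has positive NFE.

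The main obstacle is the NFE-versus-transport-cost link in the third step: turning the heuristic ``straighter, shorter trajectories need fewer steps'' into a rigorous monotone dependence of $N$ on $L$ requires either a fixed-tolerance assumption on the solver together with a pointwise curvature bound of the form $|\ddot X_t| \lesssim |\dot I_t|$ along the flow, or an appeal to the rectified-flow straightening argument. I would therefore discharge this by stating an explicit regularity assumption on $b^\theta$ and on the interpolant schedule $(\alpha_t,\beta_t)$ under which the step count is a nondecreasing function of the transport cost, so that the bound $\sum_k N_k \le N_A$ holds by construction rather than by analogy. Given that assumption, the two estimates above close the argument, and I would note that the resolution effect alone already forces $T_A > T_B$, making the transport-cost input a strengthening rather than a load-bearing requirement.
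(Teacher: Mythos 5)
Your overall factorization is the same skeleton the paper uses: the paper writes $T_A=\alpha\,N_A\,d_K$ and $T_B=\alpha\sum_k N_k d_k$ (per-evaluation cost linear in the ambient dimension), assumes the NFE is \emph{proportional} to the transport cost ($N_i = C\cdot L_i$, the explicit Assumption in Appendix~\S\ref{pf:fasterInfer}), and then compares the resulting ``loads'' $\sum_k L_k d_k$ versus $L_A d_K$ via the signal/noise energy decomposition inherited from the proof of Theorem~\ref{theorem:lowerTransportCost}. The genuine gap in your proposal is at the step where you deviate from this: you claim that a merely \emph{monotone} dependence of the step count $N$ on the transport cost $L$, combined with $L_B=\sum_k L_k < L_A$, yields $\sum_k N_k \le N_A$. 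That implication is false. Monotonicity controls each $N_k=f(L_k)$ individually but says nothing about their sum: take any nondecreasing $f$ with $f(0)=N_0>0$ (every real solver has a minimum step count per stage); then $\sum_{k=1}^K f(L_k)\ge K N_0$ can exceed $f(L_A)$ once $K$ is large, no matter how small each $L_k$ is. What is needed is superadditivity of $f$ with $f(0)=0$ --- for instance exactly the paper's proportionality $N=C\cdot L$, under which $\sum_k N_k = C\sum_k L_k < C L_A = N_A$ does follow. So the regularity assumption you propose to ``discharge'' the obstacle is not strong enough as you state it; proportionality, not monotonicity, is the load-bearing hypothesis.

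Relatedly, your closing remark that the resolution effect alone already forces $T_A>T_B$, making the transport-cost input ``a strengthening rather than a load-bearing requirement,'' is incorrect: the resolution effect gives $T_A>T_B$ only \emph{after} you have granted $\sum_k N_k\le N_A$, and that bound is precisely what the transport-cost link must supply. Once the proportionality assumption is in place, however, your assembly $T_B=\sum_k N_k\,c(d_k)\le c(d_K)\sum_k N_k\le c(d_K)\,N_A=T_A$ is correct, and it is in fact somewhat cleaner and more general than the paper's: the paper re-derives the full signal/noise load decomposition to show $\sum_k L_k d_k < L_A d_K$ directly (obtaining along the way the sharper fact that the noise load drops from $2^{2(K-1)}d_1^2\sigma^2$ to $K d_1^2\sigma^2$), whereas your route needs only $d_k\le d_K$, the conclusion of Theorem~\ref{theorem:lowerTransportCost}, and the two scaling assumptions, and it tolerates any strictly increasing per-evaluation cost $c(d)$ rather than only the linear one.
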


\section{Experiment}

To comprehensively evaluate the effectiveness of the proposed unified multi-stage generation model with \textbf{conditional dependent coupling} (CDC-FM), we conduct extensive experiments.



\textbf{Class-conditional image generation.}
In Table~\ref{tab:classcondition_imagenet_256}, we compare CDC-FM with various existing methods, including GANs, Autoregressive(AR) models, Diffusion/Flow Matching models, and Multi-stage models in the ImageNet-1k/256 benchmark~\cite{krizhevsky2012imagenet}, which represent current state-of-the-art generative performance. 

Furthermore, we also report the comparison result of the ImageNet-1k/512 benchmark~\cite{krizhevsky2012imagenet} in Table~\ref{tab:classcondition_imagenet_512}, which shows the robustness of our method in higher resolution.

We visualize the class-conditional image generation result of our method in Figure~\ref{fig:imagenet256}, demonstrating high-fidelity, accuracy, and diversity across all the classes.




\textbf{Inference Efficiency.}
Thanks to the disentangled design of each generation stage in the proposed multi-stage generation model with \textbf{conditional dependent coupling}, we explicitly separate coarse image generation from fine image refinement. This disentanglement enables both efficient inference and training while maintaining strong performance. Table~\ref{tab:classcondition_imagenet_256} reports a comparison of inference time costs between representative methods with CDC-FM under varying numbers of function evaluations (NFE) on the ImageNet-1k/256 benchmark. Furthermore, we measure the inference time of CDC-FM at varying image resolutions: \{64, 128, 256, 512\} on the ImageNet-1k benchmark. Figure~\ref{fig:InfT_FID_resolution} showcases the trend that the inference time of the proposed multi-stage generative model with \textbf{conditional dependent coupling} is linear to the generated image size, which coincides with Theorem~\ref{theorem:lowerInfer} and its proof in Appendix~\S~\ref{pf:fasterInfer}.

\vspace{-0.8em}
\begin{table*}[ht]
\centering
\scriptsize
\begin{tabular}{c l r r r r r c c}
\hline
 & Model/Method & \texttt{Params} & \texttt{FID} $\downarrow$ & \texttt{IS} $\uparrow$ & \texttt{Precision} $\uparrow$ & \texttt{Recall} $\uparrow$ & \texttt{NFE} & \texttt{Speed} \\
\hline
\multirow{3}{*}{\rotatebox{90}{GAN}} 
& BigGAN~\cite{brock2018biggan} & 112M & 6.95 & 224.5 & 0.89 & 0.38 & 1 & 1.46 \\
& GigaGAN~\cite{kang2023gigagan} & 569M & 3.45 & 225.5 & 0.84 & 0.61 & 1 & 1.32 \\
& StyleGAN~\cite{karras2018stylegan} & 166M & 2.30 & 265.1 & 0.78 & 0.53 & 1 & 0.96 \\
\hline
\multirow{9}{*}{\rotatebox{90}{Diffusion / Flow}} 
& ADM~\cite{dhariwal2021adm} & 554M & 10.94 & 101.0 & 0.69 & 0.63 & $250 \times 2$ & 9.46 \\
& U\text{-}ViT~\cite{bao2023uvit} & 287M & 3.40 & 219.9 & 0.83 & 0.52 & - & - \\
& LDM\text{-}4\text{-}G~\cite{rombach2022ldm} & 400M & 3.60 & 247.7 & - & - & $250 \times 2$ & 4.18 \\
& DiT\text{-}XL/2~\cite{peebles2023dit} & 675M & 2.27 & 278.2 & 0.83 & 0.57 & $250 \times 2$ & 3.66 \\
& SiT\text{-}XL/2~\cite{ma2024sit} & 675M & 2.06 & 270.3 & 0.82 & 0.59 & $250 \times 2$ & 3.53 \\
& VDM++~\cite{kingma2023vdmpp} & 2.0B & 2.12 & 267.7 & - & - & - & - \\
& SiD~\cite{hoogeboom2023simple} & 2.0B & 2.77 & 211.8 & - & - & $512 \times 2$ & 10.12 \\
& SiD2~\cite{hoogeboom2025sid2} & 2.0B & 1.72 & - & - & - & - & - \\
& JetFormer~\cite{tschannen2024jetformer} & 2.8B & 6.64 & - & 0.69 & 0.56 & - & - \\
\hline
\multirow{8}{*}{\rotatebox{90}{token-wise AR}} 
& VQ\text{VAE-}2~\cite{razavi2019vqvae2} & 13.5B & 31.11 & 45.0 & 0.36 & 0.57 & - & - \\
& ViT\text{-}VQGAN~\cite{yu2021vitvqgan} & 1.7B & 4.17 & 175.1 & - & - & - & - \\
& VQGAN~\cite{esser2021vqgan} & 1.4B & 15.78 & 74.3 & - & - & 1024 & 12.76 \\
& RQTransformer~\cite{lee2022rqtransformer} & 3.8B & 7.55 & 134.0 & - & - & - & - \\
& LlamaGen\text{-}B~\cite{sun2024llamagen} & 111M & 5.46 & 193.6 & 0.83 & 0.45 & - & - \\
& LlamaGen\text{-}L~\cite{sun2024llamagen} & 343M & 3.07 & 256.1 & 0.83 & 0.52 & - & - \\
& LlamaGen\text{-}XL~\cite{sun2024llamagen} & 775M & 2.62 & 244.1 & 0.80 & 0.57 & - & - \\
& LlamaGen\text{-}3B~\cite{sun2024llamagen} & 3.1B & 2.18 & 263.3 & 0.81 & 0.58 & - & - \\
\hline
\multirow{5}{*}{\rotatebox{90}{Masked AR}} 
& MaskGIT~\cite{chang2022maskgit} & 227M & 6.18 & 182.1 & 0.80 & 0.51 & 8 & 0.97 \\
& RCG~\cite{li2023self} & 502M & 3.49 & 215.5 & - & - & - & - \\
& MAR\text{-}B~\cite{li2024autoregressive} & 208M & 2.31 & 281.7 & 0.82 & 0.57 & - & - \\
& MAR\text{-}L~\cite{li2024autoregressive} & 479M & 1.78 & 296.0 & 0.81 & 0.60 & - & - \\
& MAR\text{-}H~\cite{li2024autoregressive} & 943M & 1.55 & 303.7 & 0.81 & 0.62 & $256 \times 2$ & 1.23 \\
\hline
\multirow{8}{*}{\rotatebox{90}{scale-wise AR}} 
& FlowAR\text{-}S~\cite{ren2024flowar} & 170M & 3.61 & 234.1 & 0.83 & 0.50 & - & - \\
& FlowAR\text{-}B~\cite{ren2024flowar} & 300M & 2.90 & 272.5 & 0.84 & 0.54 & - & - \\
& FlowAR\text{-}L~\cite{ren2024flowar} & 589M & 1.90 & 281.4 & 0.83 & 0.57 & - & - \\
& FlowAR\text{-}H~\cite{ren2024flowar} & 1.9B & 1.65 & 296.5 & 0.83 & 0.60 & - & - \\
& VAR\text{-}d16~\cite{tian2024var} & 310M & 3.30 & 274.4 & 0.84 & 0.51 & - & - \\
& VAR\text{-}d20~\cite{tian2024var} & 600M & 2.57 & 302.6 & 0.83 & 0.56 & - & - \\
& VAR\text{-}d24~\cite{tian2024var} & 1.0B & 2.09 & 312.9 & 0.82 & 0.59 & - & - \\
& VAR\text{-}d30~\cite{tian2024var} & 2.0B & 1.92 & 323.1 & 0.82 & 0.59 & $10 \times 2$ & 1.12 \\
\hline
\multirow{7}{*}{\rotatebox{90}{Multi-stage}} 
& CDM~\cite{ho2022cdm} & - & 4.88 & 158.7 & - & - & - & - \\
& PixelFlow~\cite{chen2025pixelflow} & 677M & 1.98 & 282.1 & 0.81 & 0.60 & - & - \\
& RIN~\cite{jabri2022rin} & 410M & 3.42 & 182.0 & - & - & - & - \\
& PaGoDA~\cite{kim2024pagoda} & 0.9B & 1.56 & 259.6 & - & 0.59 & - & - \\
& FractalMAR\text{-}B~\cite{li2025fractalmar} & 186M & 11.80 & 274.3 & 0.78 & 0.29 & - & - \\
& FractalMAR\text{-}L~\cite{li2025fractalmar} & 438M & 7.30 & 334.9 & 0.79 & 0.44 & - & - \\
& FractalMAR\text{-}H~\cite{li2025fractalmar} & 848M & 6.15 & 348.9 & 0.81 & 0.46 & - & - \\
\hline
\rowcolor{lightblue!30} & \textbf{CDC-FM (ours)} & 677M & 1.97 & 298.2 & 0.80 & 0.58 & $2 \times 4 \times 2$ & 1.31 \\
\rowcolor{lightblue!30} & \textbf{CDC-FM (ours)} & 677M & 1.87 & 301.8 & 0.82 & 0.60 & $4 \times 4 \times 2$ & 2.66 \\
\rowcolor{lightblue!30} & \textbf{CDC-FM (ours)} & 677M & 1.81 & 304.3 & 0.83 & 0.61 & $6 \times 4 \times 2$ & 5.18 \\
\hline
\end{tabular}
\vspace{-0.8em}
\caption{Comprehensive comparison of generative models on ImageNet-1k/256.  Missing values are denoted ``-''. In the NFE field, “×2" indicates that CFG incurs an NFE of 2 per sampling step.}
\label{tab:classcondition_imagenet_256}
\end{table*}
\vspace{-0.8em}

\begin{table}[t]
    \centering
    \renewcommand{\arraystretch}{0.6}
    \setlength{\aboverulesep}{0.6pt}
    \setlength{\belowrulesep}{0.6pt}
    \caption{Comparison on ImageNet-1k/512 conditional generation. Missing values are denoted ``-''.}
    \label{tab:classcondition_imagenet_512}
    \scriptsize 
    \vspace{-0.8em}
    \begin{tabular}{l r r r}
    \toprule
    Model/Method & \texttt{Params} & \texttt{FID} $\downarrow$ & \texttt{IS} $\uparrow$ \\
    \midrule
    BigGAN & - & 8.43 & 177.9 \\
    \midrule
    ADM~\cite{dhariwal2021adm} & 554M & 7.72 & 172.7 \\
    VDM++~\cite{kingma2023vdmpp} & 2B & 2.65 & 278.1 \\
    DiT\text{-}XL/2~\cite{peebles2023dit} & 675M & 3.04 & 240.8 \\
    U\text{-}ViT~\cite{bao2023uvit} & 501M & 4.05 & 263.8 \\
    SiD2~\cite{hoogeboom2025sid2} & 2.0B & 2.19 & - \\
    \midrule
    MaskGIT~\cite{chang2022maskgit} & 227M & 7.32 & 156.0 \\
    MAGVIT-v2~\cite{yulanguage} & 307M & 3.07 & 324.3 \\
    MAR-L~\cite{li2024autoregressive} & 481M & 2.74 & 205.2 \\
    VQGAN~\cite{esser2021vqgan} & 1.4B & 26.52 & 66.8 \\
    VAR-d36-s~\cite{tian2024var} & 2.0B & 2.63 & 303.2 \\
    \midrule
    PaGoDA~\cite{kim2024pagoda} & 0.9B & 1.80 & 251.3 \\
    \midrule
    \rowcolor{lightblue!30} 
    \textbf{CDC-FM} & 684M & 2.65 & 307.4 \\
    \bottomrule
    \end{tabular}
    \vspace{-0.8em}
\end{table}



\vspace{-0.8em}
\begin{figure}[htbp]
  \centering
  \begin{subfigure}{0.5\textwidth}
    \centering
    \includegraphics[width=\linewidth]{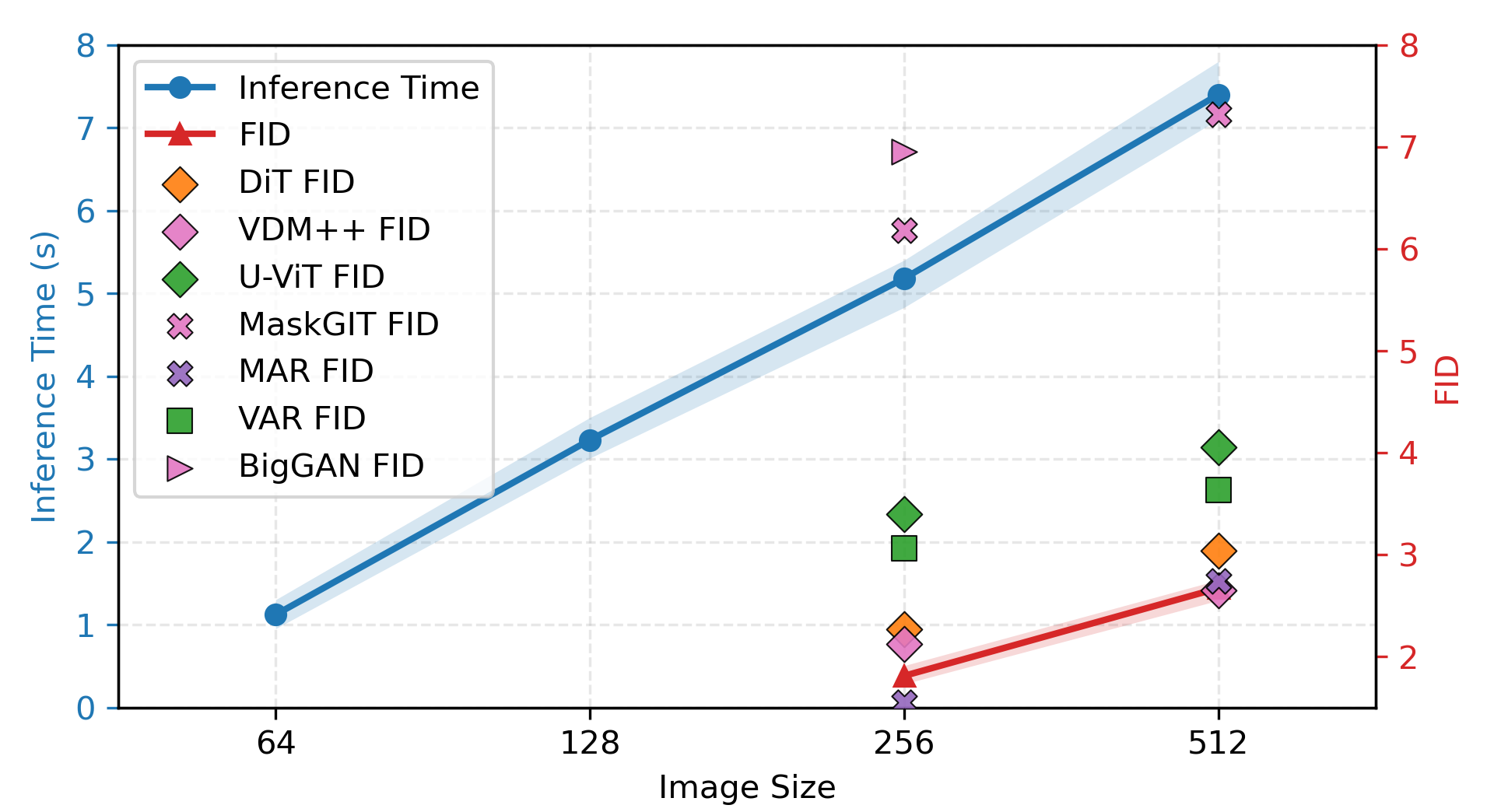}
    \vspace{-1em}
    \caption{Inference time and FID across image sizes.}
    \label{fig:InfT_FID_resolution}
  \end{subfigure}%
  \begin{subfigure}{0.5\textwidth}
    \centering
    \includegraphics[width=\linewidth]{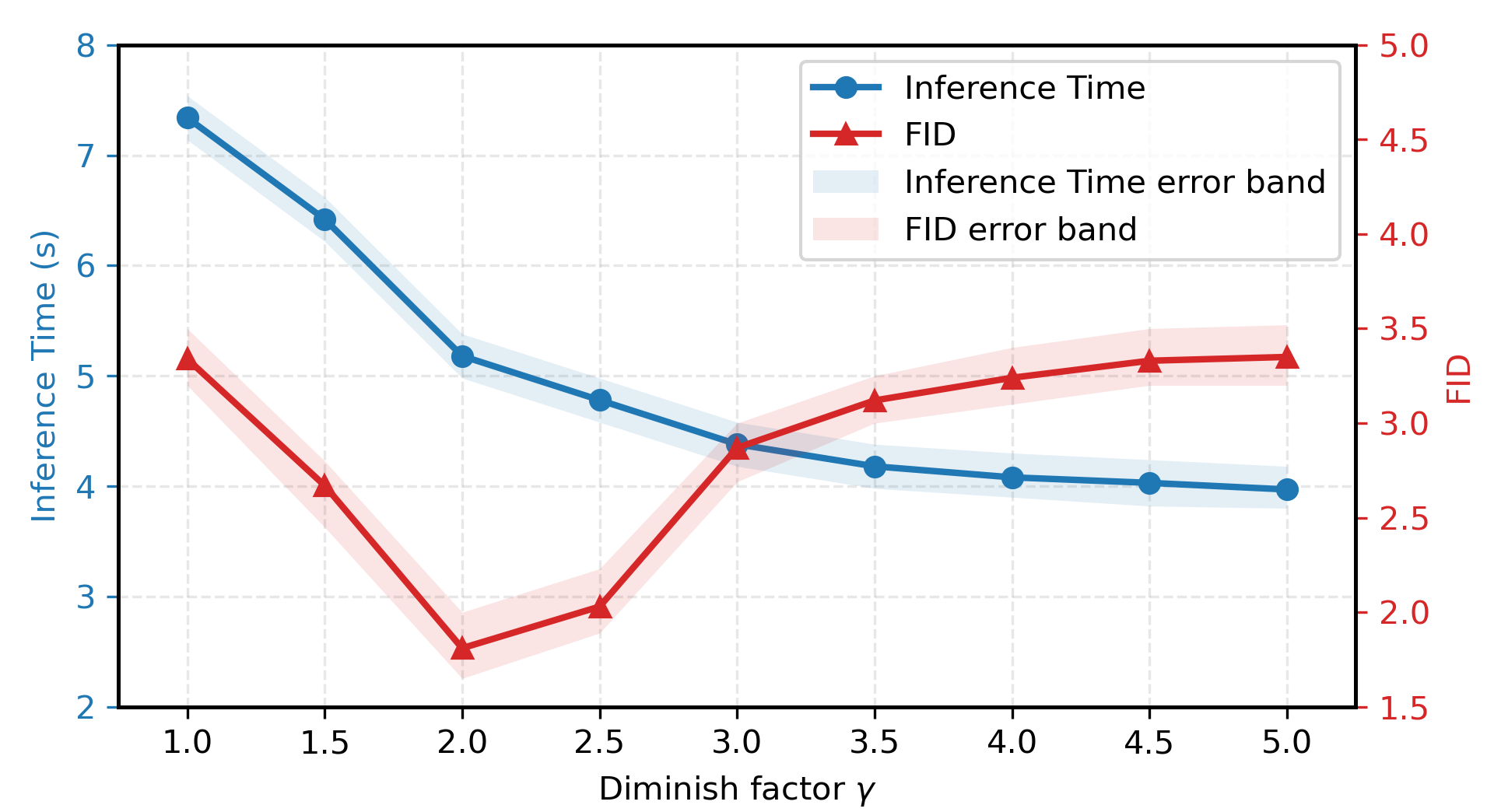}
    \vspace{-1em}
    \caption{Effect of diminish factor on inference time and FID.}
    \label{fig:DiminishFactor}
  \end{subfigure}
  \centering
  \begin{subfigure}{0.5\textwidth}
    \centering
    \includegraphics[width=\linewidth]{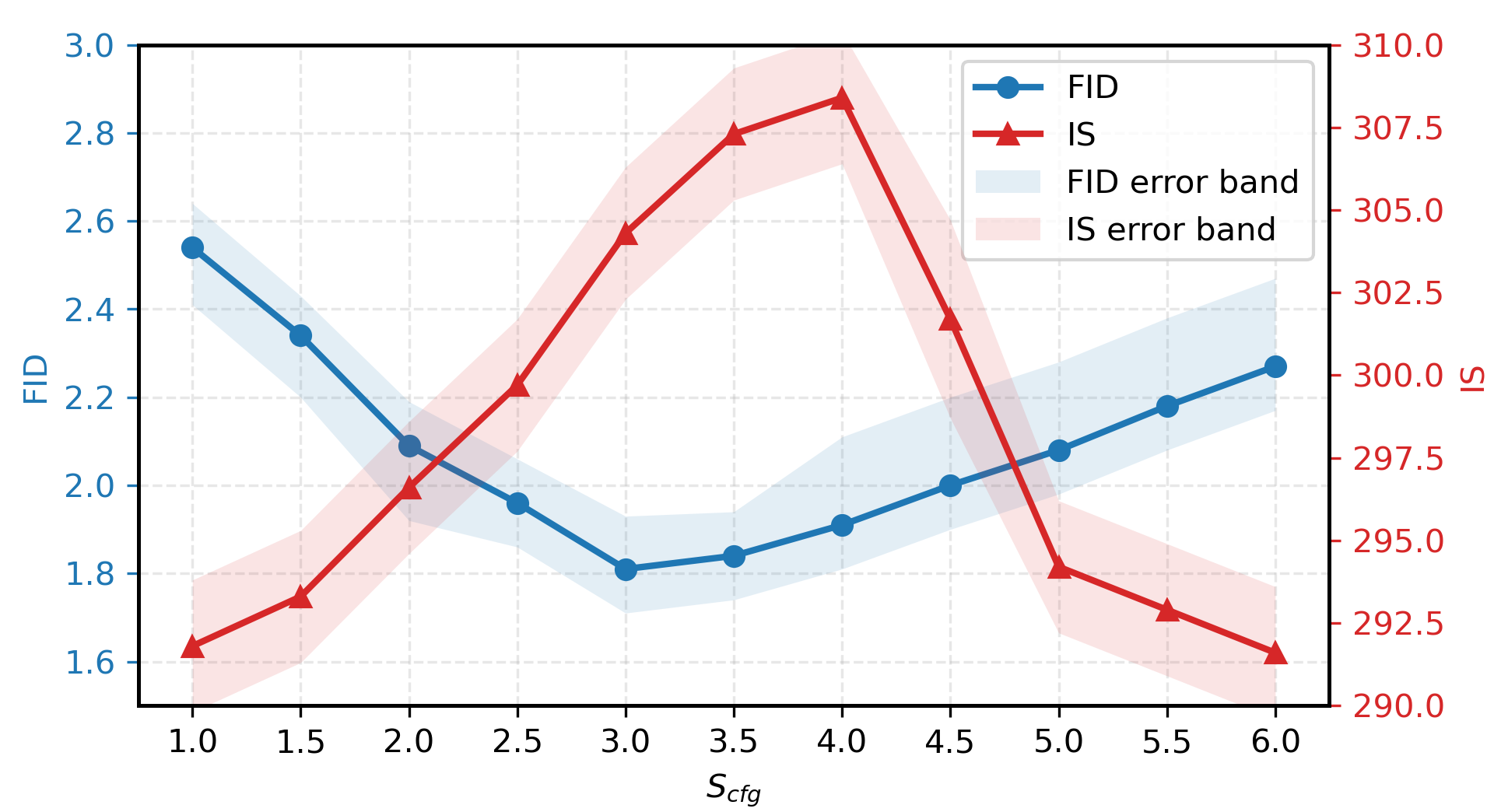}
    \vspace{-1em}
    \caption{FID and IS under different $S_\text{cfg}$.}
    \label{fig:cfg_FID_IS}
  \end{subfigure}%
  \begin{subfigure}{0.5\textwidth}
    \centering
    \includegraphics[width=\linewidth]{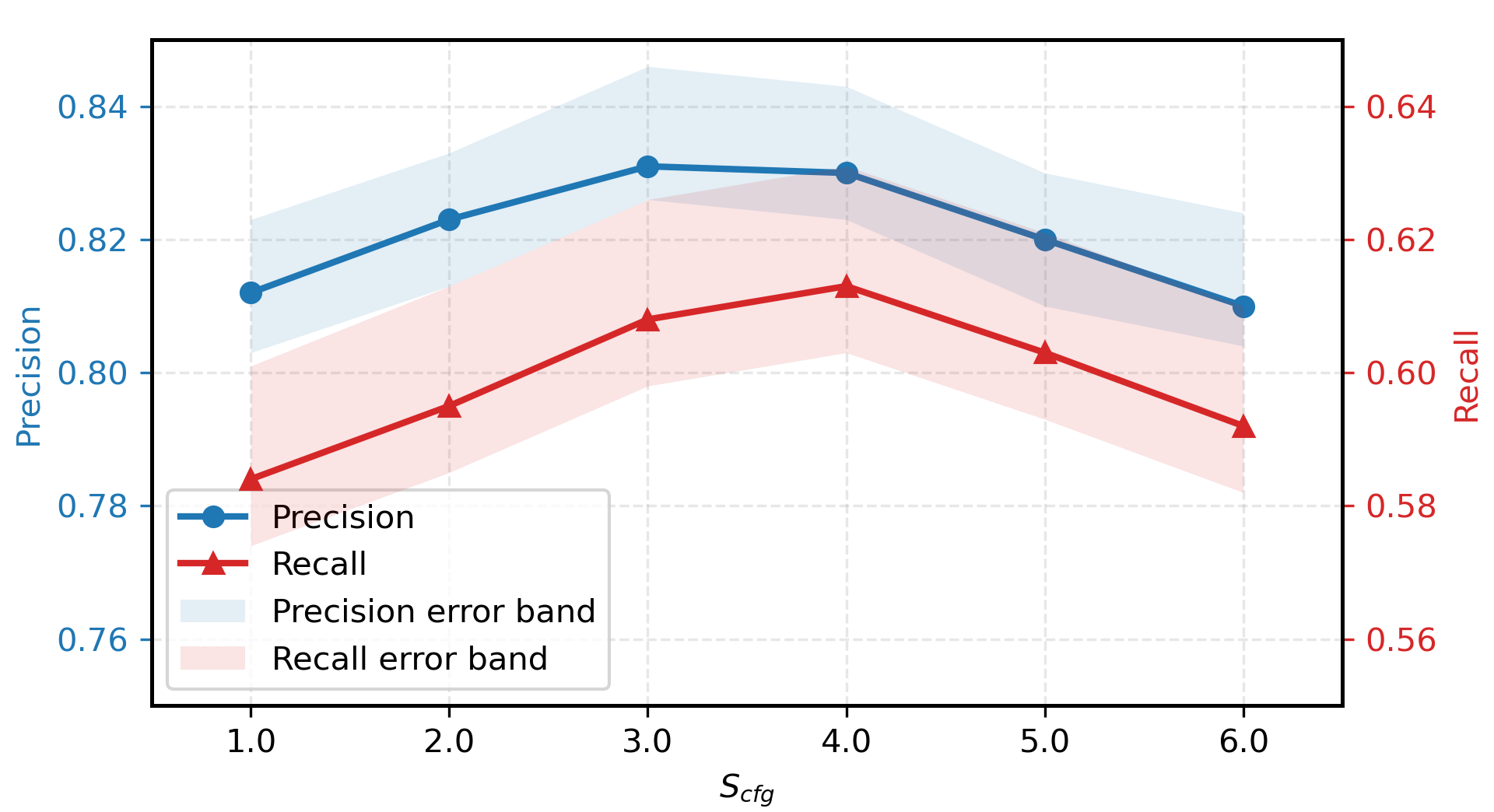}
    \vspace{-1em}
    \caption{Precision and recall under different $S_\text{cfg}$.}
    \label{fig:cfg_pre_recall}
  \end{subfigure}
\end{figure}
\vspace{-1em}




\textbf{Classifier-free Guidance.}
As discussed in Sec.~\ref{sec:cfg}, we employ a constant CFG strength $S_\text{cfg}$ across all stages. This strategy is both simple and elegant to implement, while remaining robust in practice. The performance of CDC-FM on the ImageNet-1k/256 benchmark under varying $S_\text{cfg} \in [1.0, 6.0]$ is reported in Figure~\ref{fig:cfg_FID_IS} and Figure~\ref{fig:cfg_pre_recall}.

\textbf{Impact of the Diminish Factor.}
We further investigate the impact of the diminish factor $\gamma$ in CDC-FM, varying from $1.0$ to $5.0$ in increments of $0.5$, on the ImageNet-1k/256 benchmark. The results, summarized in Figure~\ref{fig:DiminishFactor}, reveal that the FID score decreases with increasing $\gamma$ until a certain tipping point, after which it begins to rise. Meanwhile, the inference time consistently decreases as $\gamma$ grows and gradually converges.

\section{Conclusion}

We develop a unified multi-stage generation framework based on \emph{stochastic interpolant} with \textbf{conditional dependent coupling}, which enables accurate data distribution learning while operating efficiently in pixel space. Moreover, the entire process can be parameterized by a single DiT, thereby achieving end-to-end optimization and facilitating knowledge sharing across stages. We further provide formal proof that the proposed framework significantly reduces the transport cost and inference time. Diverse experiments demonstrate that our method achieves SOTA performance across metrics and is capable of being implemented in higher resolution. Related works, complete Preliminary, Algorithms, Proofs, additional information, and Statement can be seen in the Appendix.
The source code will be made publicly available upon acceptance.


\subsubsection*{Acknowledgments}
This manuscript was co-authored by Oak Ridge National Laboratory (ORNL), operated by UT-Battelle, LLC under Contract No. DE-AC05-00OR22725 with the U.S. Department
of Energy. Any subjective views or opinions expressed in
this paper do not necessarily represent those of the U.S. Department of Energy or the United States Government.

\bibliography{iclr2026_conference}
\bibliographystyle{iclr2026_conference}
\clearpage

\appendix
\addcontentsline{toc}{section}{Appendix} 

\noindent\rule{\textwidth}{0.8pt}
\startcontents[appendix]
\printcontents[appendix]{l}{1}{\section*{Appendix Contents}}
\noindent\rule{\textwidth}{0.8pt}

\section{Related Works}

\textbf{Latent Space Generation.}
Variational Autoencoders (VAEs)~\cite{kingma2013auto, higgins2017beta, ma2025cad} enable generative models to perform Flow/Diffusion generation, as well as autoregressive generation, within a lower-dimensional latent space~\cite{rombach2022ldm, peebles2023dit, ma2024sit, razavi2019vqvae2, esser2021vqgan, yu2021vitvqgan, lee2022rqtransformer, sun2024llamagen, ren2024flowar, tian2024var, jabri2022rin}. This leads to more efficient training and inference. However, the compact representations produced by the encoder and the subsequent decoding process inevitably introduce a degree of information and detail loss~\cite{li2024autoregressive, hoogeboom2025sid2, hoogeboom2023simple,xiao2025visualinstanceawareprompttuning,xiao2025visualvariationalautoencoderprompt, ma2025editingpairsfinegrainedinstructional}. Despite this limitation, VAEs remain a crucial component in these generative models. Moreover, they typically cannot be trained jointly with the generative model in a fully end-to-end manner~\cite{chen2025pixelflow}.

\textbf{Pixel Space Generation.}
Although directly implementing the generation process avoids the need for VAEs and facilitates the preservation of fine-grained details, it is highly inefficient for both training and inference~\cite{rombach2022ldm, hoogeboom2025sid2, hoogeboom2023simple, bao2023uvit}. Furthermore, due to the high dimensionality and complexity of the data distribution, learning to generate samples that faithfully capture detailed information from the target distribution becomes particularly challenging, especially in the absence of VAEs to extract meaningful latent features~\cite{chen2025pixelflow, jin2024pyramidal, ren2024flowar, tian2024var, jabri2022rin, sun2024llamagen}.

\textbf{Multi-stage Generation.}
Multi-stage generation methods advance sample synthesis by decomposing the process into a sequence of stages, where early stages operate in a lower-dimensional space to improve efficiency~\cite{chen2025pixelflow, jin2024pyramidal, tian2024var, jiao2025flexvar, renflowar, ho2022cdm, jabri2022rin, xiao2025roadbenchvisionlanguagefoundationmodel,ma2025editingpairsfinegrainedinstructional}. In addition, the overall generative task is divided into subtasks at each stage, enabling the model to gradually capture the complexity of the target data distribution~\cite{tian2024var, jiao2025flexvar, renflowar, ho2022cdm}. However, the use of disentangled stage designs hinders unified model parameterization and end-to-end optimization~\cite{chen2025pixelflow, jin2024pyramidal}. Furthermore, the decoupling inherent in multi-stage frameworks may lead to inaccurate modeling of the data distribution, thereby introducing errors that can accumulate across stages~\cite{chen2025pixelflow, jin2024pyramidal}.

\section{Qualitative results}
We visualize the class-conditional image generation result of our method in Figure~\ref{fig:imagenet256}, demonstrating high-fidelity, accuracy, and diversity across all the classes.

\newcommand{\CellW}{.25\linewidth}       
\newcommand{\HalfCellW}{.125\linewidth}  

\newcommand{\OneImg}[1]{%
  \includegraphics[width=\CellW,height=\CellW,keepaspectratio=false]{#1}%
}

\newcommand{\FourOfAClass}[4]{%
  \setlength{\tabcolsep}{0pt}\renewcommand{\arraystretch}{0}%
  \begin{tabular}{@{}cc@{}}
    \includegraphics[width=\HalfCellW,height=\HalfCellW,keepaspectratio=false]{#1} &
    \includegraphics[width=\HalfCellW,height=\HalfCellW,keepaspectratio=false]{#2} \\
    \includegraphics[width=\HalfCellW,height=\HalfCellW,keepaspectratio=false]{#3} &
    \includegraphics[width=\HalfCellW,height=\HalfCellW,keepaspectratio=false]{#4} \\
  \end{tabular}%
}

\begin{figure}[t]
\centering
\setlength{\tabcolsep}{0pt}\renewcommand{\arraystretch}{0}
\begin{tabular}{@{}cccc@{}}

\OneImg{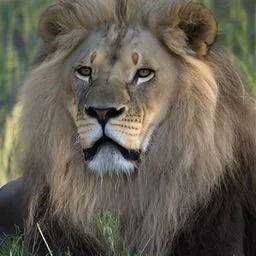} & \OneImg{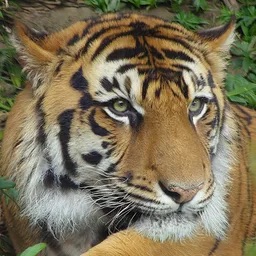} & \OneImg{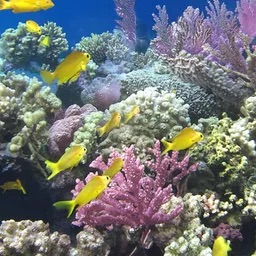} & \OneImg{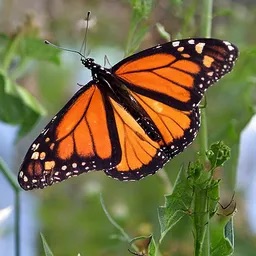} \\

\OneImg{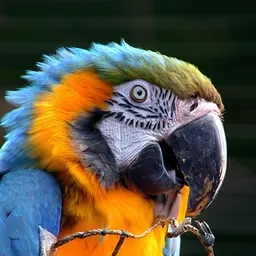} & \OneImg{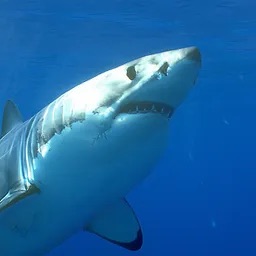} & \OneImg{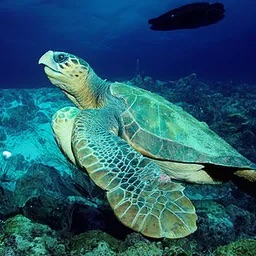} & \OneImg{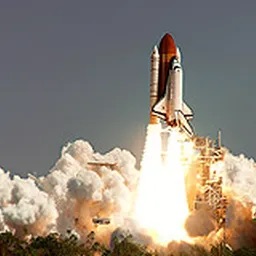} \\

\FourOfAClass{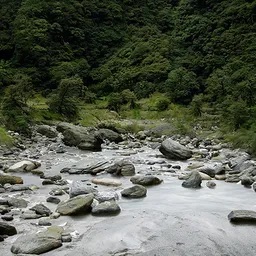}{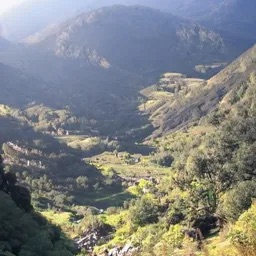}{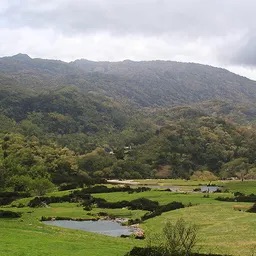}{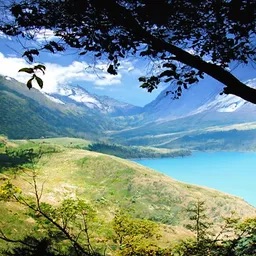} &
\FourOfAClass{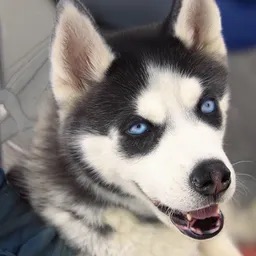}{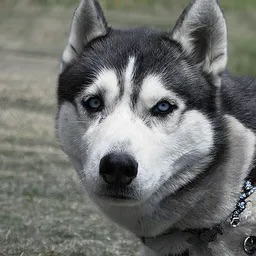}{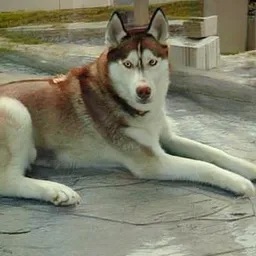}{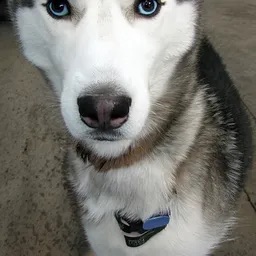} &
\FourOfAClass{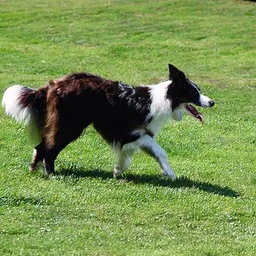}{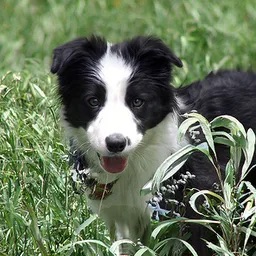}{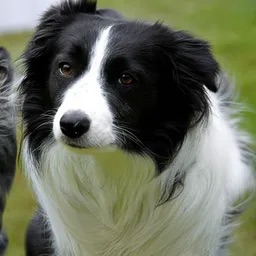}{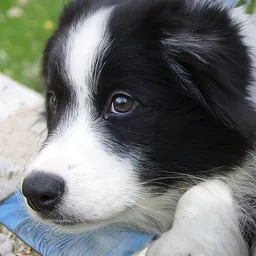} &
\FourOfAClass{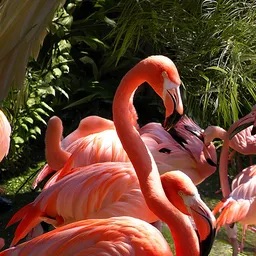}{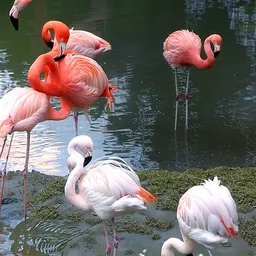}{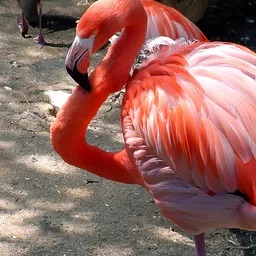}{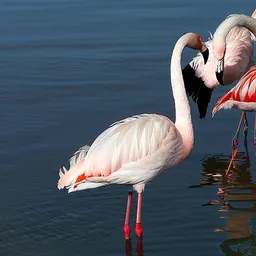} \\

\FourOfAClass{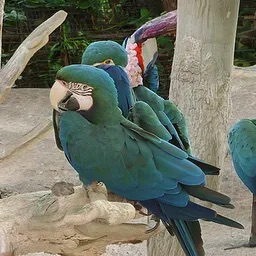}{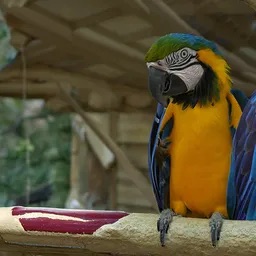}{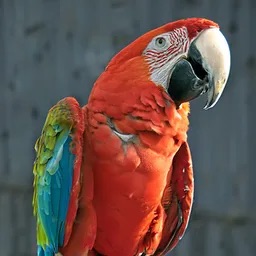}{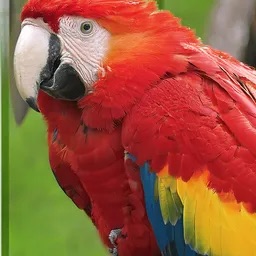} &
\FourOfAClass{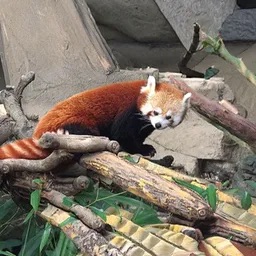}{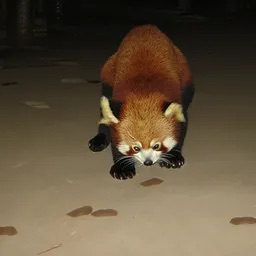}{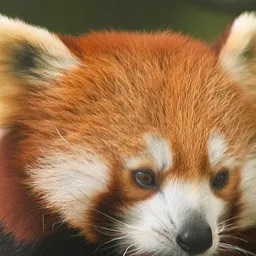}{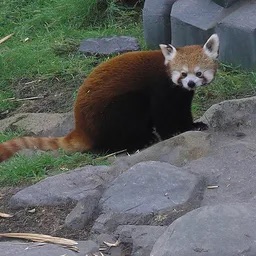} &
\FourOfAClass{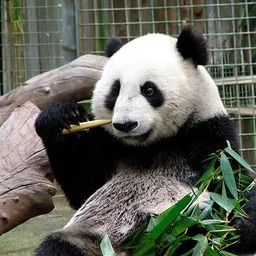}{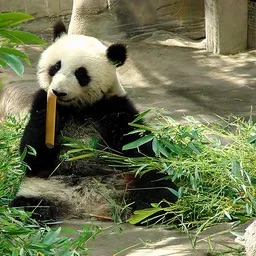}{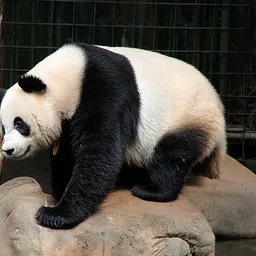}{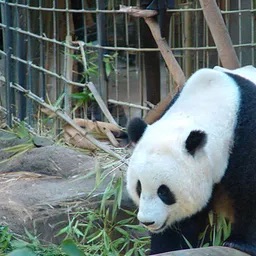} &
\FourOfAClass{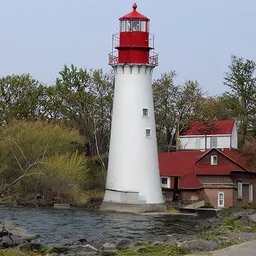}{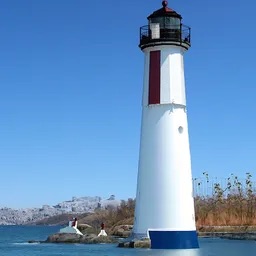}{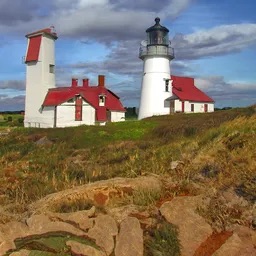}{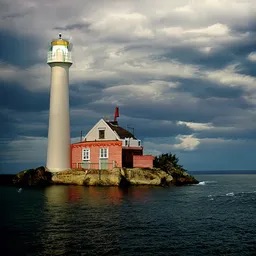} \\

\FourOfAClass{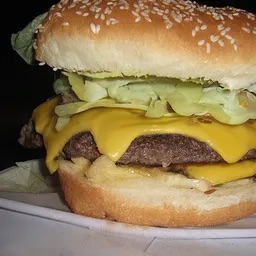}{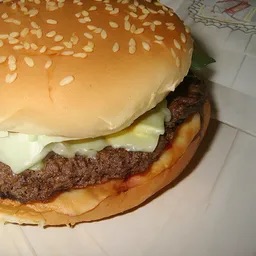}{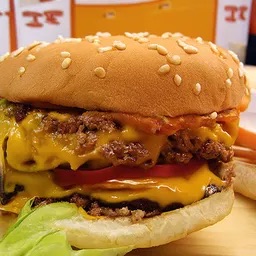}{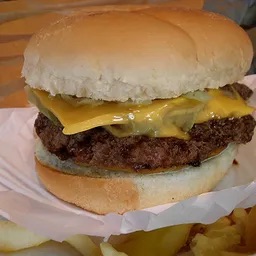} &
\FourOfAClass{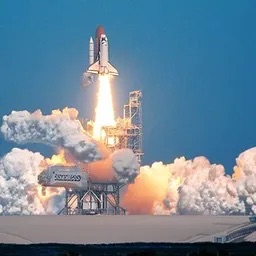}{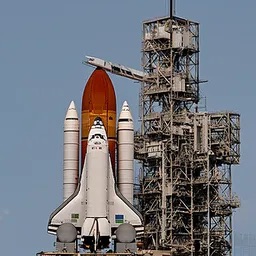}{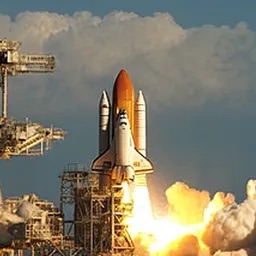}{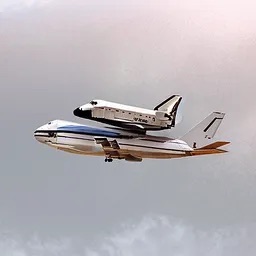} &
\FourOfAClass{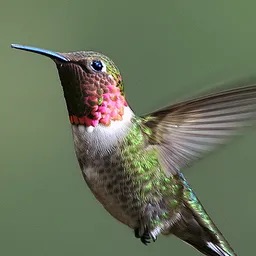}{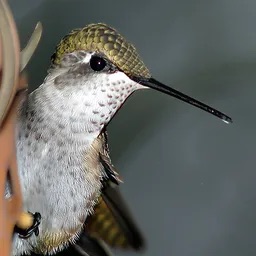}{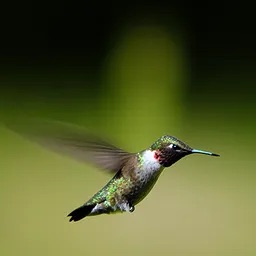}{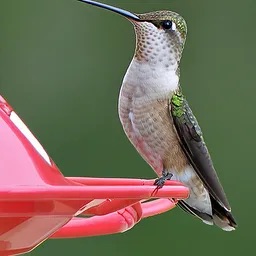} &
\FourOfAClass{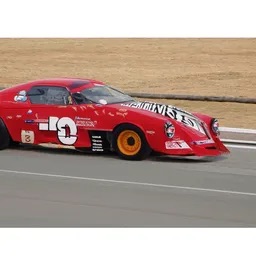}{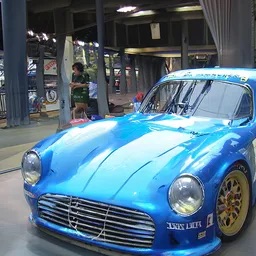}{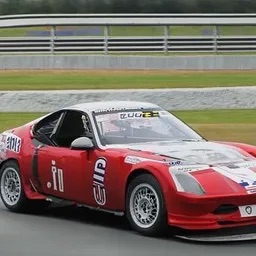}{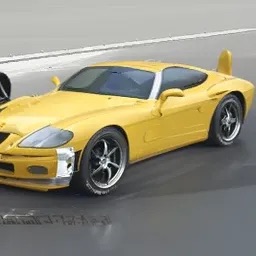} \\

\end{tabular}
\caption{ImageNet-1k/256 class-conditional generation results.  }
\label{fig:imagenet256}

\end{figure}

\section{Stochastic Interpolant} \label{appendix:SI}
The stochastic interpolant unifies the theory of Ordinary Differential Equations (ODEs) and Stochastic Differential Equations (SDEs).
\begin{definition}[Stochastic Interpolant] \label{ap:def:stochastic interpolant}
\cite{albergo2023stochastic, albergo2023stochasticcouplings, albergo2022building}
Given two probability density functions $\rho_0, \rho_1 : \mathbb{R}^d \rightarrow \mathbb{R}_{\geq 0}$, a stochastic interpolant between $\rho_0$ \textit{and} $\rho_1$ is a stochastic process $I_t$ defined as
\begin{equation}
\label{ap:eq:stochastic_interpolant}
\begin{aligned}
I_t = \alpha_t x_0 + \beta_t x_1 + \gamma_t z, \qquad t \in [0, 1]
\end{aligned}
\end{equation}
where $\alpha_t$, $\beta_t$, and $\gamma_t^2$ are differentiable functions of time satisfying the boundary conditions
$$
\alpha_0 = \beta_1 = 1, \quad \alpha_1 = \beta_0 = \gamma_0 = \gamma_1 = 0, \quad \text{and} \quad \alpha_t^2 + \beta_t^2 + \gamma_t^2 > 0 \quad \forall t \in [0, 1].
$$
The pair $(x_0, x_1)$ is drawn from a joint probability density $\rho(x_0, x_1)$ with finite second moments, whose marginals are $\rho_0$ and $\rho_1$:
\begin{align}
\int_{\mathbb{R}^d} \rho(x_0, x_1) dx_1 &= \rho_0(x_0), \\
\int_{\mathbb{R}^d} \rho(x_0, x_1) dx_0 &= \rho_1(x_1),
\end{align}
and $z \sim \mathcal{N}(0, \mathrm{Id})$  a Gaussian random variable independent of $(x_0, x_1)$, i.e., $z \perp (x_0, x_1)$.
\end{definition}

The stochastic interpolant framework uses information about the process $I_t$ to derive either an ODE or an SDE. The solutions $X_t$ to these equations are designed to push the initial law $\rho_0$ onto the law of the interpolant $I_t$ for all times $t \in [0, 1]$. Consequently, the process $I_t$ satisfies $I_{t=0} = x_0 \sim \rho_0(x_0)$ and $I_{t=1} = x_1 \sim \rho_1(x_1)$. This property allows for generative modeling: by drawing samples $x_0 \sim \rho_0(x_0)$ and using them as initial conditions $X_{t=0} = x_0$, one can generate samples $X_{t=1} \sim \rho_1(x_1)$ via numerical integration of the corresponding ODE or SDE.

\begin{theorem}[Transport equation] \label{ap:theorem:transport equation}
Let the time-dependent density of the stochastic interpolant $I_t$ be $\rho_t(x)$. We define the velocity field $b_t(x)$ and the score field $s_t(x)$ as:
\begin{equation}
\label{ap:eq:b}
\begin{aligned}
b_t(x) &= \mathbb{E}[\dot{I}_t \mid I_t = x]
\end{aligned}
\end{equation}
\begin{equation}
\label{ap:eq:s}
\begin{aligned}
s_t(x) &= \nabla \log \rho_t(x)
\end{aligned}
\end{equation}
where the dot denotes the time-derivative: $\dot f= \frac{\mathrm{d}f}{\mathrm{d}t}$ and the expectation is over $\rho(x_0, x_1)$ and $z$ conditional on $I_t = x$.
The probability density $\rho_t(x)$ satisfies the boundary conditions $\rho_{t=0}(x) = \rho_0(x)$ and $\rho_{t=1}(x) = \rho_1(x)$, and solves the transport equation:
\begin{equation}
\label{ap:eq:transportEquation}
\begin{aligned}
\partial_t \rho_t(x) + \nabla \cdot \left( b_t(x) \rho_t(x) \right) = 0. 
\end{aligned}
\end{equation}
Moreover, for every $t$ such that $\gamma_t \neq 0$, the score is given by:
\begin{equation}
\label{ap:eq:s_emphirical}
\begin{aligned}
s_t(x) = -\gamma_t^{-1} \mathbb{E}(z \mid I_t = x).
\end{aligned}
\end{equation}
Finally, the fields $b_t$ and $s_t$ are the unique minimizers of the respective objective functions:
\begin{align}
\label{ap:eq:b_objective}
L_b(\hat{b}) &= \int_0^1 \mathbb{E} \left[ \left|\hat{b}_t(I_t)\right|^2 - 2\dot{I}_t \cdot \hat{b}_t(I_t) \right] dt,
\end{align}
\begin{align}
\label{ap:eq:s_objective}
L_s(\hat{s}) &= \int_0^1 \mathbb{E} \left[ \left|\hat{s}_t(I_t)\right|^2 + 2\gamma_t^{-1} z \cdot \hat{s}_t(I_t) \right] dt,
\end{align}
where $\mathbb{E}$ denotes an expectation over $(x_0, x_1) \sim \rho(x_0, x_1)$ and $z \sim \mathcal{N}(0, \mathrm{Id})$.
\end{theorem}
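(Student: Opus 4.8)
The plan is to establish the four claims in sequence, since each builds on the previous one. The transport equation I would prove by the standard weak (test-function) argument. Fix a smooth compactly supported test function $\phi:\mathbb{R}^d\to\mathbb{R}$ and differentiate $\mathbb{E}[\phi(I_t)]=\int\phi(x)\rho_t(x)\,dx$ in time. The chain rule gives $\frac{d}{dt}\mathbb{E}[\phi(I_t)]=\mathbb{E}[\nabla\phi(I_t)\cdot\dot I_t]$, and the tower property together with the definition $b_t(x)=\mathbb{E}[\dot I_t\mid I_t=x]$ in \eqref{ap:eq:b} turns this into $\mathbb{E}[\nabla\phi(I_t)\cdot b_t(I_t)]=\int\nabla\phi\cdot b_t\,\rho_t\,dx$. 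Integrating by parts and matching against $\int\phi\,\partial_t\rho_t\,dx$ for every $\phi$ yields \eqref{ap:eq:transportEquation}. The boundary conditions $\rho_{t=0}=\rho_0$ and $\rho_{t=1}=\rho_1$ follow immediately by evaluating \eqref{ap:eq:stochastic_interpolant} at $t=0,1$ and using $\alpha_0=\beta_1=1$, $\alpha_1=\beta_0=\gamma_0=\gamma_1=0$.

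For the score identity \eqref{ap:eq:s_emphirical}, the key observation is that, conditionally on $(x_0,x_1)$, the interpolant $I_t$ is Gaussian with mean $\mu_t=\alpha_t x_0+\beta_t x_1$ and covariance $\gamma_t^2\mathrm{Id}$ whenever $\gamma_t\neq 0$. Writing $\rho_t(x)=\int\rho(x_0,x_1)\,\mathcal{N}(x;\mu_t,\gamma_t^2\mathrm{Id})\,dx_0\,dx_1$ and using the Gaussian identity $\nabla_x\mathcal{N}(x;\mu_t,\gamma_t^2\mathrm{Id})=-\gamma_t^{-2}(x-\mu_t)\,\mathcal{N}(x;\mu_t,\gamma_t^2\mathrm{Id})$, I would recognize $\gamma_t^{-1}(x-\mu_t)$ as exactly the value of $z$ consistent with $I_t=x$. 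Dividing $\nabla\rho_t$ by $\rho_t$ then collapses the integral into a conditional expectation, giving $s_t(x)=\nabla\log\rho_t(x)=-\gamma_t^{-1}\mathbb{E}[z\mid I_t=x]$.

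The variational characterizations follow from completing the square. For $L_b$, the tower property gives $\mathbb{E}[\dot I_t\cdot\hat b_t(I_t)]=\mathbb{E}[b_t(I_t)\cdot\hat b_t(I_t)]$, so the integrand rewrites as $\mathbb{E}[|\hat b_t(I_t)-b_t(I_t)|^2]-\mathbb{E}[|b_t(I_t)|^2]$; since the last term does not depend on $\hat b$, the unique minimizer in $L^2(\rho_t)$ is $\hat b_t=b_t$. For $L_s$, I would substitute the score identity \eqref{ap:eq:s_emphirical} to rewrite $\mathbb{E}[\gamma_t^{-1}z\cdot\hat s_t(I_t)]=-\mathbb{E}[s_t(I_t)\cdot\hat s_t(I_t)]$ via the tower property, after which the identical completing-the-square step identifies $\hat s_t=s_t$ as the unique minimizer.

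The main obstacle is analytic rather than algebraic: justifying the interchange of time-differentiation and gradient with the $(x_0,x_1,z)$-integral, together with integrability. For $\gamma_t\neq 0$ the Gaussian kernel is smooth and rapidly decaying, so dominated convergence applies given the finite-second-moment assumption on $\rho(x_0,x_1)$; the more delicate points are the degeneracy as $\gamma_t\to 0$, where the factor $\gamma_t^{-1}$ in \eqref{ap:eq:s_emphirical} blows up and $\rho_t$ need not be smooth (so the transport equation must be read weakly), and the decay needed to discard boundary terms in the integration by parts. I would address these by working with test functions throughout and invoking the $\gamma_t\neq 0$ hypothesis precisely where \eqref{ap:eq:s_emphirical} is asserted.
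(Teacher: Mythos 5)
Your proposal is correct, but note that the paper itself never proves this theorem: it is imported verbatim as background from the cited stochastic interpolant works (Albergo et al.), and the paper's appendix proofs concern only the transport-cost and inference-time theorems. Your argument --- the weak test-function derivation of the continuity equation, the Gaussian-mixture representation $\rho_t(x)=\int\rho(x_0,x_1)\,\mathcal{N}(x;\alpha_t x_0+\beta_t x_1,\gamma_t^2\mathrm{Id})\,dx_0\,dx_1$ yielding the score identity, and completing the square via the tower property for both objectives --- is precisely the standard proof in those original references, so there is nothing in the paper to diverge from.
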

The objective functions \eqref{ap:eq:b_objective} and \eqref{ap:eq:s_objective} can be readily estimated in practice from samples $(x_0, x_1) \sim \rho(x_0, x_1)$ and $z \sim  \mathcal{N}(0, 1)$, which will enable us to learn approximations for use in a generative model. The transport equation \ref{ap:eq:transportEquation} can be used to derive generative models, as we now show.

\begin{corollary}[Probability flow (ODE) and diffusions (SDE)] \label{ap:corollary:ODE and SDE}
The transport equation \ref{ap:eq:transportEquation} implies that the density of the solutions $X_t$ to the probability flow ODE matches the interpolant density $\rho_t$. The solutions to the probability flow equation:
\begin{equation}
\label{ap:eq:pf_ode}
\dot{X}_t = b_t(X_t)
\end{equation}
satisfy the properties that
\begin{align}
\label{ap:eq:ode_forward}
X_{t=1} \sim \rho_1(x_1) \quad &\text{if} \quad X_{t=0} \sim \rho_0(x_0), \\
\label{ap:eq:ode_backward}
X_{t=0} \sim \rho_0(x_0) \quad &\text{if} \quad X_{t=1} \sim \rho_1(x_1).
\end{align}
In addition, for any choice of a time-dependent diffusion coefficient $\epsilon_t \ge 0$, solutions to the forward SDE
\begin{equation}
\label{ap:eq:sde_forward}
dX_t^F = \left( b_t(X_t^F) + \epsilon_t s_t(X_t^F) \right) dt + \sqrt{2\epsilon_t}dW_t,
\end{equation}
satisfy the property that
\begin{equation}
\label{ap:eq:sde_forward_prop}
X_{t=1}^F \sim \rho_1(x_1) \quad \text{if} \quad X_{t=0}^F \sim \rho_0(x_0).
\end{equation}
And solutions to the backward SDE
\begin{equation}
\label{ap:eq:sde_backward}
dX_t^R = \left( b_t(X_t^R) - \epsilon_t s_t(X_t^R) \right) dt + \sqrt{2\epsilon_t}dW_t,
\end{equation}
satisfy the property that
\begin{equation}
\label{ap:eq:sde_backward_prop}
X_{t=0}^R \sim \rho_0(x_0) \quad \text{if} \quad X_{t=1}^R \sim \rho_1(x_1).
\end{equation}
\end{corollary}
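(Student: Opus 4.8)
The plan is to show, in each of the three cases, that the marginal law $p_t$ of the process coincides with the interpolant density $\rho_t$ for every $t\in[0,1]$; the stated endpoint behaviours then follow immediately from the boundary data $\rho_{t=0}=\rho_0$ and $\rho_{t=1}=\rho_1$ recorded in Theorem~\ref{ap:theorem:transport equation}. In every case the argument reduces to verifying that $\rho_t$ solves the Kolmogorov forward (Fokker--Planck) equation attached to the given dynamics, and then appealing to uniqueness of that linear PDE under matching boundary conditions. The common technical lever is the score identity $s_t=\nabla\log\rho_t$, equivalently $s_t\rho_t=\nabla\rho_t$.

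First I would treat the probability flow ODE $\dot X_t=b_t(X_t)$ of \eqref{ap:eq:pf_ode}. The law $p_t$ of its solution is transported by the Liouville (continuity) equation $\partial_t p_t+\nabla\!\cdot\!(b_t p_t)=0$, which is exactly the transport equation \eqref{ap:eq:transportEquation} satisfied by $\rho_t$. Imposing $p_{t=0}=\rho_0$ and invoking uniqueness of the continuity equation for the fixed field $b_t$ gives $p_t=\rho_t$ for all $t$, hence \eqref{ap:eq:ode_forward}. Since the deterministic flow is invertible, running the same equation from the terminal condition $p_{t=1}=\rho_1$ backward in time recovers $p_{t=0}=\rho_0$, which is \eqref{ap:eq:ode_backward}.

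Next I would handle the forward SDE \eqref{ap:eq:sde_forward}, whose Fokker--Planck equation is $\partial_t p_t+\nabla\!\cdot\!\big((b_t+\epsilon_t s_t)p_t\big)=\epsilon_t\Delta p_t$. Substituting $p_t=\rho_t$ and using $s_t\rho_t=\nabla\rho_t$ gives $\epsilon_t\nabla\!\cdot\!(s_t\rho_t)=\epsilon_t\Delta\rho_t$, so the added-drift term on the left exactly matches the Laplacian diffusion term on the right; the remaining residue $\partial_t\rho_t+\nabla\!\cdot\!(b_t\rho_t)$ vanishes by \eqref{ap:eq:transportEquation}. Thus $\rho_t$ solves this Fokker--Planck equation, and with $p_{t=0}=\rho_0$ together with uniqueness we obtain $X_{t=1}^F\sim\rho_1$, i.e.\ \eqref{ap:eq:sde_forward_prop}.

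The main obstacle is the backward SDE \eqref{ap:eq:sde_backward}, which is integrated from $t=1$ down to $t=0$ and therefore requires the theory of time-reversed diffusions rather than a direct forward Fokker--Planck check. Here I would invoke Anderson's time-reversal formula: the reverse of the forward diffusion with drift $b_t+\epsilon_t s_t$ and coefficient $\sqrt{2\epsilon_t}$ carries reverse-time drift $-(b_t+\epsilon_t s_t)+2\epsilon_t\nabla\log\rho_t=-b_t+\epsilon_t s_t$, after collapsing the correction via $\nabla\log\rho_t=s_t$; re-expressing this in forward $t$ reproduces precisely the drift $b_t-\epsilon_t s_t$ of \eqref{ap:eq:sde_backward}. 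This confirms the backward process shares the marginals $\rho_t$, so initializing with $\rho_1$ at $t=1$ returns $\rho_0$ at $t=0$, giving \eqref{ap:eq:sde_backward_prop}. The delicate points I expect to contend with are the It\^o versus reverse-It\^o sign bookkeeping in the time reversal and the regularity and integrability hypotheses---finite second moments of $\rho(x_0,x_1)$ and smoothness of $\rho_t$ where $\gamma_t\neq 0$---needed to guarantee that the relevant Fokker--Planck solutions are unique; these I would impose as standing assumptions rather than re-derive.
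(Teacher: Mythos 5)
Your proposal is correct, but note that the paper itself gives no proof of this corollary: it is quoted as background from the stochastic interpolant literature (Albergo et al.), and the appendix moves on immediately after stating it. Measured against the standard argument in those source works, your route is essentially the canonical one for the ODE and forward SDE cases --- identify the law of the process with the solution of the Liouville/Fokker--Planck equation, observe that the identity $s_t\rho_t=\nabla\rho_t$ makes the drift correction $\epsilon_t\nabla\!\cdot\!(s_t\rho_t)$ cancel the diffusion term $\epsilon_t\Delta\rho_t$, and conclude by uniqueness of the linear PDE with matching initial data. Where you diverge is the backward SDE: you invoke Anderson's time-reversal theorem, which is valid (and your sign bookkeeping checks out: $-(b_t+\epsilon_t s_t)+2\epsilon_t s_t=-b_t+\epsilon_t s_t$, i.e.\ drift $b_t-\epsilon_t s_t$ in forward-time notation), but it imports the regularity hypotheses of that theorem and a uniqueness-in-law step that you only gesture at. The more economical route, and the one implicit in the source papers, is to apply your own forward argument to the time-reversed density: set $\tilde\rho_s=\rho_{1-s}$, note it satisfies the continuity equation with velocity $-b_{1-s}$, and run the identical Fokker--Planck verification, using $s_t\rho_t=\nabla\rho_t$ once more to absorb the $+\epsilon$ correction; this handles the backward SDE with no machinery beyond what you already used for the forward one. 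Either way the endpoint claims follow from $\rho_{t=0}=\rho_0$, $\rho_{t=1}=\rho_1$, so your proposal stands as a complete proof of a statement the paper leaves to citation.
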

Both deterministic and stochastic generative models were derived within the stochastic interpolant framework.

\section{Proof}
\subsection{Proof of Theorem~\ref{theorem:lowerTransportCost}} \label{pf:transportcost}
We provide a formal proof that a cascaded image generation model using Flow Matching with \textbf{conditional dependent coupling} has a significantly lower transportation cost than a direct, single-stage model that generates a high-resolution image from Gaussian noise, as shown in Theorem \ref{theorem:lowerTransportCost}. The proof is based on an orthogonal decomposition of the image signal energy, which reveals that the cost difference is primarily driven by the dimensionality of the initial noise space.

\subsubsection{Problem Formulation and Definitions}

Our goal is to prove that the transportation cost of a direct Flow Matching model ($L_A$) is greater than the total transport cost of a multi-stage cascaded model with conditional dependent couplings ($L_B$).


\begin{proposition}[Transportation Cost Bound]
From Definition~\ref{definition:Transport cost}, the transportation cost of a probability flow is bounded by the integral of the expected squared norm of the interpolant's time derivative. For an interpolant path from a source distribution $\rho_0(x_0)$ to a target distribution $\rho_1(x_1)$, we denote this bound as $L$:
\begin{equation}
\label{eq:cost_bound}
\text{Cost} = \mathbb{E}_{x_0 \sim \rho_0}\left[ |X_{t=1}(x_0) - x_0|^2 \right] 
\leq \int_0^1 \mathbb{E}\left[ |\dot{I}_t|^2 \right] dt := L.
\end{equation}
For simplicity in the proof, we analyze the case where the interpolant is $I_t = (1-t)x_0 + t x_1$, such that its derivative is $\dot{I}_t = x_1 - x_0$. The logic holds for general interpolants. In this case, the cost bound reduces to~\cite{lipman2024flow, albergo2023stochasticcouplings}:
\begin{equation}
L = \mathbb{E}[|x_1 - x_0|^2].
\end{equation}
\end{proposition}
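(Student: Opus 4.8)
The plan is to establish the two assertions of the proposition in sequence: first the transport-cost inequality $\mathrm{Cost}\le L$, and then its reduction to $\mathbb{E}[|x_1-x_0|^2]$ for the linear interpolant. The inequality is at heart a Cauchy--Schwarz-plus-conditional-Jensen estimate carried along the probability-flow ODE, so I would lean on the continuity equation \eqref{eq:transportEquation} and its probability-flow corollary---both already established---to identify the law of the ODE solution with that of the interpolant. (The bound itself is also asserted in Definition~\ref{definition:Transport cost}; the point here is to exhibit the derivation and then specialize it.)

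First I would write the net displacement of a flow trajectory as a time integral of the learned velocity. Since $X_t(x_0)$ solves $\dot X_t = b_t(X_t)$ in \eqref{eq:pf_ode} with $X_{t=0}=x_0$, the fundamental theorem of calculus gives $X_{t=1}(x_0)-x_0 = \int_0^1 b_t(X_t)\,\mathrm{d}t$. Applying Cauchy--Schwarz in the time variable (equivalently, Jensen's inequality for $v\mapsto|v|^2$ against the uniform measure on $[0,1]$, which has total mass one) yields the pointwise bound $|X_{t=1}(x_0)-x_0|^2 \le \int_0^1 |b_t(X_t)|^2\,\mathrm{d}t$. Taking $\mathbb{E}_{x_0\sim\rho_0}$ and exchanging expectation with the time integral by Tonelli then gives $\mathrm{Cost}\le \int_0^1 \mathbb{E}_{x_0\sim\rho_0}\!\big[|b_t(X_t)|^2\big]\,\mathrm{d}t$.

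Next I would convert the right-hand side into a statement about the interpolant. By the probability-flow corollary, $X_{t=0}\sim\rho_0$ forces $X_t\sim\rho_t$ for every $t$, where $\rho_t$ is exactly the law of $I_t$ from Definition~\ref{def:Deterministic_interpolant_and_Flow_Matching}; hence the pushforward identity $\mathbb{E}_{x_0\sim\rho_0}[|b_t(X_t)|^2] = \mathbb{E}_{I_t}[|b_t(I_t)|^2]$. Because $b_t(x)=\mathbb{E}[\dot I_t\mid I_t=x]$ is a conditional expectation, the $L^2$-contraction property (conditional Jensen), $\mathbb{E}\big[|\mathbb{E}[Y\mid X]|^2\big]\le \mathbb{E}[|Y|^2]$, gives $\mathbb{E}[|b_t(I_t)|^2]\le \mathbb{E}[|\dot I_t|^2]$. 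Integrating over $t\in[0,1]$ delivers $\mathrm{Cost}\le \int_0^1 \mathbb{E}[|\dot I_t|^2]\,\mathrm{d}t = L$, the first claim.

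Finally, for the reduction I would substitute the linear interpolant $I_t=(1-t)x_0+t x_1$, whose derivative $\dot I_t = x_1-x_0$ is independent of $t$; then $\mathbb{E}[|\dot I_t|^2]=\mathbb{E}[|x_1-x_0|^2]$ is constant in time, the time integral is trivial, and $L=\mathbb{E}[|x_1-x_0|^2]$. The general-interpolant remark follows identically: with $\dot I_t=\dot\alpha_t x_0+\dot\beta_t x_1$ one gets $L=\int_0^1\big(\dot\alpha_t^2\,\mathbb{E}|x_0|^2+2\dot\alpha_t\dot\beta_t\,\mathbb{E}\langle x_0,x_1\rangle+\dot\beta_t^2\,\mathbb{E}|x_1|^2\big)\mathrm{d}t$, finite by the finite-second-moment hypothesis on the coupling. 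The only genuine subtlety---and thus the main obstacle---is the identification $X_t\sim\rho_t$ together with the measurability and integrability needed to justify Tonelli and the conditional-Jensen step; these are underwritten by the finite-second-moment assumption and by the continuity equation \eqref{eq:transportEquation} already proven, so no new machinery is required.
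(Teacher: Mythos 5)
Your proof is correct, and it supplies a derivation that the paper itself does not spell out: the paper treats the bound in \eqref{eq:cost_bound} as definitional, inheriting it directly from Definition~\ref{definition:Transport cost} (which in turn defers to Benamou--Brenier and the stochastic-interpolant coupling references), and then performs only the trivial specialization that $\dot I_t = x_1 - x_0$ is constant in $t$, so the time integral collapses to $\mathbb{E}[|x_1-x_0|^2]$. You instead reconstruct the underlying argument from first principles: the fundamental theorem of calculus along the flow $\dot X_t = b_t(X_t)$ of \eqref{eq:pf_ode}, Jensen (Cauchy--Schwarz) in the time variable, the pushforward identification $X_t \sim \rho_t$ supplied by the probability-flow corollary, and the $L^2$-contraction of conditional expectation applied to $b_t(x) = \mathbb{E}[\dot I_t \mid I_t = x]$. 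Each of these steps is sound, and together they constitute the standard proof found in the cited literature. What the paper's route buys is brevity --- this proposition is only scaffolding for Theorem~\ref{theorem:lowerTransportCost}, whose substantive content is the downstream signal/noise energy decomposition --- while your route buys self-containedness: a reader can check the inequality without consulting external sources, you make explicit exactly where the two Jensen applications enter, and your closing remark on general interpolants ($\dot I_t = \dot\alpha_t x_0 + \dot\beta_t x_1$) clarifies that finiteness of $L$ is precisely the finite-second-moment hypothesis on the coupling. Your final specialization to the linear interpolant coincides with the paper's.
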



\subsubsubsection{Model A: Direct Generation}
This model generates a high-resolution image $x^{(K)} \in \mathbb{R}^{d_K}$ in a single stage.
\begin{itemize}
    \item \textbf{Source Distribution} $\rho_0(x_0)$: $x_0 \sim \mathcal{N}(0, \sigma^2 I_{d_K})$.
    \item \textbf{Target Distribution} $\rho_1(x_1)$: The high-resolution data distribution, $x_1 \sim x^{(K)}$.
    \item \textbf{Cost} ($L_A$): The transport cost from $x_0$ to $x_1$.
\end{itemize}

\subsubsubsection{Model B: Cascaded Generation}
This model generates the image in $K$ stages, from resolution $d_1$ to $d_K$.
\begin{itemize}
    \item \textbf{Stage 1:} Generates a low-resolution image $x_1^{(1)} \in \mathbb{R}^{d_1}$.
        \begin{itemize}
            \item Source: $x_0^{(1)} \sim \mathcal{N}(0, \sigma^2 I_{d_1})$.
            \item Target: $x_1^{(1)} \sim \rho_1^{(1)}(x_1^{(1)} | x^{(K)})$.
            \item Cost: $L_1$.
        \end{itemize}
    \item \textbf{Stages $k \in [2, K]$:} Refines a coarse image to a higher resolution image $x_1^{(k)} \in \mathbb{R}^{d_k}$.
        \begin{itemize}
            \item Target: $x_1^{(k)} \sim \rho_1^{(k)}(x_1^{(k)} | x^{(K)})$.
            \item Source: $x_0^{(k)} \sim \rho_0^{(k)}(x_0^{(k)} | x_1^{(k)})$ is defined by a deterministic \textbf{conditional dependent coupling} with a small stage-dependent Gaussian noise: $x_0^{(k)} = U_k(D_k(x_1^{(k)})) + \sigma_k \zeta_k$, as shown in \eqref{eq:data_dependent_coupling_compute}.
            \item Cost: $L_k$.
        \end{itemize}
    \item \textbf{Total Cost} ($L_B$): The sum of the costs of all stages, $L_B = \sum_{k=1}^{K} L_k$.
\end{itemize}

We seek to prove that $L_A > L_B$.

\subsubsection{Orthogonal Decomposition of Image Energy}

We formalize the concept that an image can be represented as a base layer plus a sum of details.

\begin{definition}[Multiresolution Projections]
Let the vector space be that of the highest resolution, $\mathbb{R}^{d_K}$. We define an embedding operator $U_{k \to K}: \mathbb{R}^{d_k} \to \mathbb{R}^{d_K}$ that upscales a level-$k$ image to the full resolution space. We then define a projection operator $D_{K \to k}: \mathbb{R}^{d_K} \to \mathbb{R}^{d_k}$ that projects a high-resolution image onto the subspace of images with resolution $k$. These two definitions are inherited from the previous section. For an image $x \in \mathbb{R}^{d_K}$, let $\tilde{x}^{(k)} = U_{k \to K}(D_{K \to k}(x))$. We also define the null image $\tilde{x}^{(0)} = \mathbf{0}$. Let the \textbf{detail vector} at level $k$ be $\tilde{\epsilon}^{(k)} = \tilde{x}^{(k)} - \tilde{x}^{(k-1)}$. 
\end{definition}

\begin{assumption}[Orthogonality of Details] \label{assumption:orthogonality}
The detail vector $\tilde{\epsilon}^{(k)}$, which contains the frequency content for level $k$, is orthogonal to the detail vectors of all other levels. 
\begin{equation}
\mathbb{E}[(\tilde{\epsilon}^{(k)})^T \tilde{\epsilon}^{(j)}] = 0 \quad \text{for} \quad k \neq j
\end{equation}
\end{assumption}
This also reflects the nature of the process, in which each stage responsible for adding details is conditionally independent, as illustrated in \eqref{eq:conditional_data-dependent_coupling_joint_distribution_training}.
Under Assumption \ref{assumption:orthogonality}, the full image can be written as a telescoping sum:
\begin{equation}
x = \tilde{x}^{(K)} = \sum_{k=1}^{K} (\tilde{x}^{(k)} - \tilde{x}^{(k-1)}) = \sum_{k=1}^{K} \tilde{\epsilon}^{(k)}
\end{equation}

\begin{lemma}[Energy Decomposition] \label{lemma:energy_decomposition}
Under Assumption \ref{assumption:orthogonality}, the expected energy of an image is the sum of the expected energies of its orthogonal detail components.
\begin{equation}
\mathbb{E}[|x|^2] = \mathbb{E}\left[\left|\sum_{k=1}^{K} \tilde{\epsilon}^{(k)}\right|^2\right] = \sum_{k=1}^{K} \mathbb{E}[|\tilde{\epsilon}^{(k)}|^2]
\end{equation}
\end{lemma}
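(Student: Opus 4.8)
The plan is to prove this by a direct Pythagorean expansion in the common ambient space $\mathbb{R}^{d_K}$, building on the telescoping representation $x = \sum_{k=1}^{K} \tilde{\epsilon}^{(k)}$ established immediately before the lemma. First I would expand the squared Euclidean norm of the sum into a double sum of pairwise inner products,
\[
\left|\sum_{k=1}^{K} \tilde{\epsilon}^{(k)}\right|^2 = \sum_{k=1}^{K}\sum_{j=1}^{K} (\tilde{\epsilon}^{(k)})^T \tilde{\epsilon}^{(j)},
\]
and then take the expectation of both sides. Because both index sets are finite, linearity of expectation moves $\mathbb{E}$ inside the double sum with no convergence issue, yielding $\mathbb{E}[|x|^2] = \sum_{k,j} \mathbb{E}[(\tilde{\epsilon}^{(k)})^T \tilde{\epsilon}^{(j)}]$.

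Next I would invoke Assumption~\ref{assumption:orthogonality} to annihilate every off-diagonal contribution: for $k \neq j$ we have $\mathbb{E}[(\tilde{\epsilon}^{(k)})^T \tilde{\epsilon}^{(j)}] = 0$, so only the $k=j$ terms survive. Collecting the diagonal terms gives
\[
\mathbb{E}[|x|^2] = \sum_{k=1}^{K} \mathbb{E}[(\tilde{\epsilon}^{(k)})^T \tilde{\epsilon}^{(k)}] = \sum_{k=1}^{K} \mathbb{E}[|\tilde{\epsilon}^{(k)}|^2],
\]
which is exactly the claimed energy decomposition.

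The computation is essentially the Pythagorean theorem lifted to expectations, so I do not anticipate a genuine obstacle; the only points warranting care are bookkeeping rather than substance. Specifically, I would confirm that each cross-correlation $\mathbb{E}[(\tilde{\epsilon}^{(k)})^T \tilde{\epsilon}^{(j)}]$ is finite, which legitimizes the term-by-term rearrangement and follows from the finite-second-moment hypothesis carried throughout the \emph{stochastic interpolant} framework (Definition~\ref{ap:def:stochastic interpolant}). I would also note that the detail vectors $\tilde{\epsilon}^{(k)} = U_{k \to K}(D_{K \to k}(x)) - U_{(k-1) \to K}(D_{K \to (k-1)}(x))$ are by construction embedded in the single space $\mathbb{R}^{d_K}$, so the inner products and the orthogonality statement are dimensionally well posed. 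In short, the entire content of the lemma resides in Assumption~\ref{assumption:orthogonality}, which it merely reinterprets: the vanishing of inter-scale cross-correlations is what converts the norm of a sum into an additive budget of per-scale energies.
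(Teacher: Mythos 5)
Your proof is correct and follows essentially the same route as the paper's: the paper's one-line argument also expands the squared norm via linearity of expectation and invokes Assumption~\ref{assumption:orthogonality} to kill the cross-terms $\mathbb{E}[(\tilde{\epsilon}^{(k)})^T \tilde{\epsilon}^{(j)}]$ for $k \neq j$. Your version simply spells out the double-sum expansion and the finiteness/well-posedness bookkeeping that the paper leaves implicit.
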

\begin{proof}
This follows directly from the linearity of expectation and the orthogonality assumption, as all cross-terms $\mathbb{E}[(\tilde{\epsilon}^{(k)})^T \tilde{\epsilon}^{(j)}]$ for $k \neq j$ vanish.
\end{proof}

\subsubsection{Analysis of Transportation Costs}

\subsubsubsection{\textbf{Cost of the Direct Model ($L_A$)}.}
The cost is $L_A = \mathbb{E}[|x_1 - x_0|^2]$, where $x_1 \sim x^{(K)}$ and $x_0 \sim \mathcal{N}(0, \sigma^2 I_{d_K})$. Due to independence and $\mathbb{E}[x_0] = 0$:
\begin{equation}
\begin{aligned}
L_A &= \mathbb{E}[|x_1|^2] - 2\mathbb{E}[x_1^T x_0] + \mathbb{E}[|x_0|^2] \\
    &= \mathbb{E}[|x_1|^2] + \mathbb{E}[|x_0|^2]
\end{aligned}
\end{equation}
Using Lemma \ref{lemma:energy_decomposition} and the variance of the noise, $\mathbb{E}[|x_0|^2] = d_K \sigma^2$:
\begin{equation}
L_A = \left( \sum_{k=1}^{K} \mathbb{E}[|\tilde{\epsilon}^{(k)}|^2] \right) + d_K \sigma^2
\end{equation}

\subsubsubsection{\textbf{Cost of the Cascaded Model with conditional dependent couplings ($L_B$)}.}
The total cost is $L_B = \sum_{k=1}^{K} L_k$.
\begin{itemize}
    \item \textbf{Stage 1:} $L_1 = \mathbb{E}[|x_1^{(1)} - x_0^{(1)}|^2] = \mathbb{E}[|x_1^{(1)}|^2] + \mathbb{E}[|x_0^{(1)}|^2]$. The data $x_1^{(1)}$ represents the coarsest level of detail, corresponding to $\tilde{\epsilon}^{(1)}$. The noise variance is $\mathbb{E}[|x_0^{(1)}|^2] = d_1 \sigma^2$. Thus,
    \begin{equation}
    L_1 \approx \mathbb{E}[|\tilde{\epsilon}^{(1)}|^2] + d_1 \sigma^2
    \end{equation}
    \item \textbf{Stages $k \in [2, K]$:} The cost is $L_k = \mathbb{E}[|x_1^{(k)} - x_0^{(k)}|^2]$. Substituting the data-dependent coupling $x_0^{(k)} = U_k(D_k(x_1^{(k)})) + \sigma_k \zeta_k$, we get:
    \begin{equation}
    \begin{aligned}
    L_k &= \mathbb{E}[|x_1^{(k)} - U_k(D_k(x_1^{(k)})) - \sigma_k \zeta_k|^2] \\
    &= \mathbb{E}[|x_1^{(k)} - U_k(D_k(x_1^{(k)}))|^2] - 2 \ \mathbb{E}[\sigma_k \zeta_k^T \cdot (x_1^{(k)} - U_k(D_k(x_1^{(k)})))] + \mathbb{E}[|\sigma_k \zeta_k|^2] \\
    &= \mathbb{E}[|x_1^{(k)} - U_k(D_k(x_1^{(k)}))|^2] - 2 \ \mathbb{E}[\sigma_k \zeta_k] \ \mathbb{E}[(x_1^{(k)} - U_k(D_k(x_1^{(k)})))] + \mathbb{E}[|\sigma_k \zeta_k|^2] \\
    &= \mathbb{E}[|x_1^{(k)} - U_k(D_k(x_1^{(k)}))|^2] + \mathbb{E}[|\sigma_k \zeta_k|^2],
    \end{aligned}
    \end{equation}
    where independence: $\sigma_k \zeta_k \perp\!\!\!\perp (x_1^{(k)} - U_k(D_k(x_1^{(k)})))$ and $\mathbb{E}[\sigma_k \zeta_k] = 0$.
    $\mathbb{E}[|x_1^{(k)} - U_k(D_k(x_1^{(k)}))|^2]$ is the expected energy of the details required to go from resolution $k-1$ to $k$, which corresponds precisely to the energy of the detail vector $\tilde{\epsilon}^{(k)}$: $\mathbb{E}[|\tilde{\epsilon}^{(k)}|^2]$, and the variance of the noise, $\mathbb{E}[|\sigma_k \zeta_k|^2] = d_k \sigma_k^2$:
    \begin{equation}
    \label{eq:LK}
    L_k \approx \mathbb{E}[|\tilde{\epsilon}^{(k)}|^2] + d_k \sigma_k^2
    \end{equation}
\end{itemize}
Summing the costs for all stages of Model B:
\begin{equation}
\begin{aligned}
L_B &= L_1 + \sum_{k=2}^{K} L_k \\
    &\approx (\mathbb{E}[|\tilde{\epsilon}^{(1)}|^2] + d_1 \sigma^2) + \sum_{k=2}^{K} \left( \mathbb{E}[|\tilde{\epsilon}^{(k)}|^2] + d_k \sigma_k^2 \right) \\
    &= \left( \sum_{k=1}^{K} \mathbb{E}[|\tilde{\epsilon}^{(k)}|^2] \right) + d_1 \sigma^2 + \left( \sum_{k=2}^{K} d_k \sigma_k^2 \right),
\end{aligned}
\end{equation}
Here, \( d_k = 2^{k-1} d_1 \) for \( 1 \leq k \leq K \), reflecting the pyramid structure that aligns with the conditional dependent coupling design. As discussed in the previous section, we define \( \sigma_k \) to decrease as \( k \) increases, which matches the intuition that as the stage \( k \) progresses, the requirements for diversity and stochasticity gradually diminish, as shown in \eqref{eq:sigma_pyramid}, where we specify $\gamma =2$ for simplicity:
\begin{equation}
\begin{aligned}
\sigma_k = 2^{-(k-1)} \sigma \quad \text{for } 2 \le k \le K.
\end{aligned}
\end{equation}

\subsubsection{Main Proof and Conclusion}


\begin{proof}
From the detailed analysis in the previous subsections, we have derived the transportation costs for the direct and cascaded models:

\begin{equation}
\label{eq:transportCost_LA}
L_A = \left( \sum_{k=1}^{K} \mathbb{E}[|\tilde{\epsilon}^{(k)}|^2] \right) + d_K \sigma^2,
\end{equation}
where $d_K$ is the dimensionality of the high-resolution image space.

For the cascaded model with $K$ stages, the total cost is the sum of the costs of all stages:
\begin{equation}
\label{eq:transportCost_LB}
L_B = \left( \sum_{k=1}^{K} \mathbb{E}[|\tilde{\epsilon}^{(k)}|^2] \right) + d_1 \sigma^2 + \sum_{k=2}^{K} d_k \sigma_k^2.
\end{equation}

The term $\sum_{k=1}^{K} \mathbb{E}[|\tilde{\epsilon}^{(k)}|^2]$, representing the total signal energy (see Lemma \ref{lemma:energy_decomposition}), appears in both $L_A$ and $L_B$ and thus cancels when we consider the difference:
\begin{equation}
L_A - L_B = d_K \sigma^2 - \left( d_1 \sigma^2 + \sum_{k=2}^{K} d_k \sigma_k^2 \right).
\end{equation}

To evaluate the sign of $L_A - L_B$, we use the definitions of $d_k$ and $\sigma_k$ from the cascaded model with conditional dependent coupling:
\[
d_k = 2^{k-1} d_1, \quad \sigma_k = 2^{-(k-1)} \sigma \quad \text{for } 2 \le k \le K.
\]
Thus, the second term can be expanded as:
\begin{equation}
\sum_{k=2}^{K} d_k \sigma_k^2 = \sum_{k=2}^{K} \left( 2^{k-1} d_1 \right) \left( 2^{-(k-1)} \sigma \right)^2 = d_1 \sigma^2 \sum_{k=2}^{K} 2^{-(k-1)}.
\end{equation}

We have:
\begin{equation}
L_A - L_B = \sigma^2 \left( d_K - d_1 - d_1 \sum_{k=2}^{K} 2^{-(k-1)} \right).
\end{equation}

Since $d_K = 2^{K-1} d_1$, we obtain:
\begin{equation}
L_A - L_B = \sigma^2 d_1 \left( 2^{K-1} - 1 - \sum_{k=2}^{K} 2^{-(k-1)} \right).
\end{equation}

Specifically, $L_A - L_B = 0$ when $K=1$.
Observe that $2^{K-1} - 1 \gg \sum_{k=2}^{K} 2^{-(k-1)}$ for all $K \ge 2$, because the left-hand side grows exponentially while the right-hand side is a bounded geometric series:
\[
\sum_{k=2}^{K} 2^{-(k-1)} < \sum_{m=1}^{\infty} 2^{-m} = 1.
\]
Therefore, the term in parentheses is strictly positive:
\[
2^{K-1} - 1 - \sum_{k=2}^{K} 2^{-(k-1)} > 0.
\]

Since $\sigma^2 > 0$ and $d_1 > 0$, we conclude:
\begin{equation}
L_A - L_B > 0 \implies L_A > L_B.
\end{equation}

This completes the proof. The main factor leading to $L_A > L_B$ is that the direct model pays the cost of transporting a high-dimensional Gaussian noise vector of dimension $d_K$, while the cascaded model introduces noise primarily in the low-dimensional early stages and relies on conditional dependent couplings to incrementally add fine-grained details. This multi-resolution structure results in a strictly lower total transportation cost.
\end{proof}
\qed

\subsection{Proof of Theorem~\ref{theorem:lowerInfer}} \label{pf:fasterInfer}

We provide a formal proof that the cascaded Flow Matching model with the \textbf{conditional dependent coupling} strategy has strictly smaller expected inference time than the naive single-stage model for high-resolution image generation, as stated in Theorem~\ref{theorem:lowerInfer}. The argument follows three steps: (i) relate the expected inference-time functional $\iota(\cdot)$ to the Number of Function Evaluations (NFE) and the per-evaluation cost, (ii) connect NFE to a transportation cost, and (iii) compare the resulting computational ``loads’’ of the direct and cascaded schemes by separating signal and noise contributions. 

\subsubsection{Problem Formulation and Assumptions}

Recall the theorem’s notation:
\[
T_A \;=\; \mathbb{E}\!\left[\iota\!\big(x_0 \to \hat{x}_1\big)\right],
\qquad
T_B \;=\; \sum_{k=1}^{K} \mathbb{E}\!\left[\iota\!\big(x_0^{(k)} \to \hat{x}_1^{(k)}\big)\right].
\]
The direct model draws $x_0 \sim \mathcal{N}(0,\sigma^2 I_{d_K})$ and produces $\hat{x}_1\in\mathbb{R}^{d_K}$ in one stage; the cascaded model comprises $K$ stages, with stage $k$ transporting $x_0^{(k)} \sim \mathcal{N}(x_0^{(k)}; m_k(x_1^{(k)}), \sigma_k^2 I_{d_k})$ to $\hat{x}_1^{(k)} \in \mathbb{R}^{d_k}$, where $d_1<\cdots<d_K$, as state in \eqref{eq:data_dependent_coupling}.

\paragraph{Per-evaluation cost model.}
For ODE-based generators (e.g., probability flow ODEs), the expected inference-time cost is the product of the NFE and the cost per function evaluation. With a fixed backbone and solver tolerance, the per-evaluation cost scales linearly with the ambient dimensionality:
\[
\mathbb{E}\!\left[\iota\!\big(x_0 \to \hat{x}_1\big)\right] \;=\; \alpha\, N_A\, d_K,
\qquad
\mathbb{E}\!\left[\iota\!\big(x_0^{(k)} \to \hat{x}_1^{(k)}\big)\right] \;=\; \alpha\, N_k\, d_k,
\]
for some constant $\alpha>0$. Hence
\[
T_A \;=\; \alpha\, N_A\, d_K,
\qquad
T_B \;=\; \alpha \sum_{k=1}^{K} N_k\, d_k .
\]
To prove $T_A > T_B$ it suffices to show
\begin{equation}
\sum_{k=1}^{K} N_k\, d_k \;<\; N_A\, d_K .
\label{eq:goal-NFE}
\end{equation}

\begin{assumption}[NFE vs. Transportation Cost]
The NFE required to solve a probability flow ODE to a given accuracy is proportional to the transportation cost $L = \mathbb{E}[|x_1 - x_0|^2]$. This cost serves as a proxy for the complexity of the vector field.
\begin{equation}
N_i = C \cdot L_i
\end{equation}
where $C$ is a constant dependent on the ODE solver and desired precision.
\end{assumption}

Under this assumption, \eqref{eq:goal-NFE} is equivalent to
\begin{equation}
\sum_{k=1}^{K} L_k\, d_k \;<\; L_A\, d_K .
\label{eq:goal-load}
\end{equation}
Define the \emph{computational load} of a transport as $\text{Load}:=L\times d$. Then
\[
\text{Load}_A \;=\; L_A d_K, 
\qquad 
\text{Load}_B \;=\; \sum_{k=1}^{K} L_k d_k,
\]
and proving \eqref{eq:goal-load} is equivalent to showing $\text{Load}_B < \text{Load}_A$.

\subsubsection{Analysis of Computational Load}

We use the transportation-cost decompositions established in Section~\ref{pf:transportcost}. Writing $\tilde{\epsilon}^{(k)}$ for the signal component associated with scale/stage $k$:

\paragraph{Direct model (single stage).}
The load is the product of its transportation cost $L_A$(\eqref{eq:transportCost_LA}) and the dimensionality of the final image $d_K$.
\begin{equation}
\label{eq:load-A}
\begin{aligned}
\text{Load}_A &= L_A \cdot d_K \\ &= \left( \left( \sum_{j=1}^{K} \mathbb{E}[|\tilde{\epsilon}^{(j)}|^2] \right) + d_K \sigma^2 \right) d_K \\
&= d_K \sum_{j=1}^{K} \mathbb{E}[|\tilde{\epsilon}^{(j)}|^2] + d_K^2 \sigma^2 
\end{aligned}
\end{equation}

\paragraph{Cascaded model (multi-stage).}
The total load is the sum of the loads of each stage, where the load for stage $k$ is $L_k \cdot d_k$, where $L_k$ is shown in \eqref{eq:LK}.
For stage $1$, $x_0^{(1)} \sim \mathcal{N}(0,\sigma^2 I_{d_1})$; for stages $k\ge 2$, the conditional dependent coupling strategy injects $x_0^{(k)} \sim \mathcal{N}(x_0^{(k)}; m_k(x_1^{(k)}), \sigma_k^2 I_{d_k})$. Hence
\begin{equation}
\begin{aligned}
\text{Load}_B 
&= \sum_{k=1}^{K} L_k d_k
= L_1 d_1 + \sum_{k=2}^{K} L_k d_k \\
&= (\mathbb{E}[|\tilde{\epsilon}^{(1)}|^2] + d_1 \sigma^2) d_1 + \sum_{k=2}^{K} (\mathbb{E}[|\tilde{\epsilon}^{(k)}|^2] + d_k \sigma_k^2) d_k \\
&= \mathbb{E}[|\tilde{\epsilon}^{(1)}|^2]d_1 + d_1^2 \sigma^2 + \sum_{k=2}^{K} \mathbb{E}[|\tilde{\epsilon}^{(k)}|^2]d_k + \sum_{k=2}^{K} d_k^2 \sigma_k^2 \\
&= \left(\sum_{k=1}^{K} \mathbb{E}[|\tilde{\epsilon}^{(k)}|^2] d_k \right) + \left(d_1^2 \sigma^2 + \sum_{k=2}^{K} d_k^2 \sigma_k^2 \right)
\label{eq:load-B}
\end{aligned}
\end{equation}

\subsubsection{Main Proof and Conclusion}

We compare \eqref{eq:load-A} and \eqref{eq:load-B} by splitting into signal and noise parts.

\paragraph{(1) Signal component.}
From \eqref{eq:load-A} and \eqref{eq:load-B},
\[
\text{Load}_{A,\text{signal}} \;=\; d_K \sum_{k=1}^{K} \mathbb{E}\!\big[|\tilde{\epsilon}^{(k)}|^2\big],
\qquad
\text{Load}_{B,\text{signal}} \;=\; \sum_{k=1}^{K} \mathbb{E}\!\big[|\tilde{\epsilon}^{(k)}|^2\big]\, d_k .
\]
Since $d_k < d_K$ for all $1 \leq k < K$, each term in the summation for $\text{Load}_{B, \text{signal}}$ is smaller than(or equal to, when $k=K$) the corresponding term for $\text{Load}_{A, \text{signal}}$. Therefore:
\begin{equation}
\label{eq:signal-ineq}
\sum_{k=1}^{K} \mathbb{E}[|\tilde{\epsilon}^{(k)}|^2] d_k < d_K \sum_{k=1}^{K} \mathbb{E}[|\tilde{\epsilon}^{(k)}|^2]
\end{equation}
The computational work related to generating the image content is strictly lower in the cascaded model because most detail components $\tilde{\epsilon}^{(k)}$ are processed at much lower dimensionalities $d_k$.

\paragraph{(2) Noise component (coupling strategy).}
Adopt the conditional dependent coupling in which spatial resolutions double by stage and the injected noise variance is inversely scaled, as shown in \eqref{eq:sigma_pyramid}:
\[
d_k = 2^{k-1} d_1, \quad \sigma_k = 2^{-(k-1)} \sigma \quad \text{for } 2 \le k \le K.
\]
Then
\begin{equation}
\begin{aligned}
\text{Load}_{B,\text{noise}}
&= d_1^2 \sigma^2 + \sum_{k=2}^{K} d_k^2 \sigma_k^2 \\
&= d_1^2 \sigma^2 + \sum_{k=2}^{K} (2^{k-1} d_1)^2 (2^{-(k-1)} \sigma)^2 \\
&= d_1^2 \sigma^2 + \sum_{k=2}^{K} (2^{2(k-1)} d_1^2) (2^{-2(k-1)} \sigma^2) \\
&= d_1^2 \sigma^2 + \sum_{k=2}^{K} d_1^2 \sigma^2 \\
&= d_1^2 \sigma^2 + (K-1)d_1^2 \sigma^2 = K d_1^2 \sigma^2,
\end{aligned}
\end{equation}
whereas for the direct model
\begin{equation}
\text{Load}_{A, \text{noise}} = d_K^2 \sigma^2 = (2^{K-1}d_1)^2 \sigma^2 = 2^{2(K-1)} d_1^2 \sigma^2
\end{equation}
For $K=1$, both loads coincide; for any practical cascade with $K\ge 2$,
\begin{equation}
\text{Load}_{B,\text{noise}} \;=\; K\, d_1^2 \sigma^2 
\;<\; 2^{2(K-1)} d_1^2 \sigma^2 \;=\; \text{Load}_{A,\text{noise}} .
\label{eq:noise-ineq}
\end{equation}
Thus the noise-related work is exponentially smaller in the cascaded scheme.

\paragraph{Combining (1) and (2).}
Adding \eqref{eq:signal-ineq} and \eqref{eq:noise-ineq} yields
\[
\text{Load}_B \;<\; \text{Load}_A
\quad\Longleftrightarrow\quad
\sum_{k=1}^{K} L_k d_k \;<\; L_A d_K .
\]
Using the NFE--transportation-cost proportionality and the per-evaluation cost model, common multiplicative constants cancel, giving
\[
T_B \;=\; \sum_{k=1}^{K} \mathbb{E}\!\left[\iota\!\big(x_0^{(k)} \to \hat{x}_1^{(k)}\big)\right]
\;<\;
\mathbb{E}\!\left[\iota\!\big(x_0 \to \hat{x}_1\big)\right]
\;=\; T_A .
\]
For $K=1$ the two procedures coincide, while for every $K\ge 2$ the inequality is strict. This completes the proof of Theorem~\ref{theorem:lowerInfer}.

\section{Algorithm}
\label{ap:alg_uncondition}

Here we present the detailed unconditional training and inference procedures for the unified multi-stage generative model $b^\theta$ with \textbf{conditional dependent coupling} in Algorithm~\ref{alg:training_multistage_fm} and Algorithm~\ref{alg:inference_multistage_fm}. The inference steps are described using the forward Euler method~\cite{lipman2024flow, lipman2022flow} for solving the ODE, as a simple example of a numerical solver. 

\begin{algorithm}[t]
\caption{Training a unified multi-stage Flow-Matching model $b^\theta$ (ODE view)}
\label{alg:training_multistage_fm}
\begin{algorithmic}[1]
\Require Dataset $\mathcal{D}=\{x^{(K)}\}\sim p_{\text{data}}$, stages $k\in\{1,\dots,K\}$ with dimensions $d_k$, feature-map resolutions $r_k$ and resolution embedding $e_k=E(r_k)$; down/upsamplers $D_k:\mathbb{R}^{d_k}\!\to\!\mathbb{R}^{d_{k-1}}$ and $U_k:\mathbb{R}^{d_{k-1}}\!\to\!\mathbb{R}^{d_k}$, composite downsampler $D_{K\to k}$; time partition $0=t^1_0<t^1_1=t^2_0<\cdots<t^K_1=1$; noise scales $\sigma$ (stage 1) and $\{\sigma_k\}_{k=2}^K$; batch size $B$.
\Statex \textbf{Comment:} Ground-truth at stage $k$ is $x_1^{(k)}=D_{K\to k}(x^{(K)})\sim\rho_1^{(k)}(\cdot\mid x^{(K)})$.
\For{each training step}
  \For{$i=1$ {\bf to} $B$}
    \State Sample $x^{(K)} \sim p_{\text{data}}$; sample $t \sim \mathcal{U}[0,1]$.
    \State Find $k$ such that $t \in [t^k_0,t^k_1]$; set the rescaled time $\tau \gets \dfrac{t-t^k_0}{t^k_1-t^k_0}\in[0,1]$.
    \State \textbf{Target at stage $k$:} $x_1^{(k)} \gets D_{K\to k}(x^{(K)})$ \Comment{$x_1^{(k)} \sim \rho_1^{(k)}(\cdot \mid x^{(K)})$}
    \If{$k=1$} \Comment{Stage $1$: noise-to-image coupling is independent}
        \State Sample $x_0^{(1)} \sim \mathcal{N}(0,\sigma^2 I_{d_1})$ \Comment{$\rho_0^{(1)}(x_0^{(1)})$}
    \Else \Comment{Stage $k>1$: conditional dependent coupling}
        \State Sample $\zeta_k \sim \mathcal{N}(0,I_{d_k})$
        \State $m_k(x_1^{(k)}) \gets U_k\!\big(D_k(x_1^{(k)})\big)$
        \State $x_0^{(k)} \gets m_k(x_1^{(k)}) + \sigma_k \zeta_k$ \Comment{$\rho_0^{(k)}(\cdot\mid x_1^{(k)})=\mathcal{N}(m_k,\sigma_k^2 I)$}
    \EndIf
    \State \textbf{Linear interpolant:} $I_\tau^{(k)} \gets (1-\tau)\,x_0^{(k)} + \tau\,x_1^{(k)}$
    \State \textbf{Target velocity:} $\dot I_\tau^{(k)} \gets x_1^{(k)} - x_0^{(k)}$ \Comment{constant in $\tau$ for linear path}
    \State $e_k \gets E(r_k)$ \Comment{resolution embedding}
    \State $u_\theta \gets b^\theta\!\big(I_\tau^{(k)}, \tau, e_k\big)$ \Comment{DiT vector field shared across stages}
    \State Per-sample loss: $\ell_i \gets \|u_\theta\|_2^2 - 2\,\dot I_\tau^{(k)}\!\cdot u_\theta$ \Comment{equiv.\ to $\|u_\theta-\dot I_\tau^{(k)}\|_2^2$ up to a const.}
  \EndFor
  \State $\mathcal{L}(\theta) \gets \frac{1}{B}\sum_{i=1}^B \ell_i$ \Comment{matches $\mathbb{E}\big[\|b^\theta\|^2 - 2\,\dot I\!\cdot b^\theta\big]$}
  \State Update parameters: $\theta \leftarrow \theta - \eta \nabla_\theta \mathcal{L}(\theta)$
\EndFor
\end{algorithmic}
\end{algorithm}

\begin{algorithm}[t]
\caption{Inference via sequential multi-stage ODE integration (forward Euler)}
\label{alg:inference_multistage_fm}
\begin{algorithmic}[1]
\Require Trained vector field $b^\theta$; stages $k\in\{1,\dots,K\}$ with dimensions $d_k$, feature-map resolutions $r_k$ and resolution embedding $e_k=E(r_k)$; upsamplers $U_k:\mathbb{R}^{d_{k-1}}\!\to\!\mathbb{R}^{d_k}$; noise scales $\sigma$ (stage 1) and $\{\sigma_k\}_{k=2}^K$; per-stage step counts $\{N_k\}$.
\Statex \textbf{Comment:} The Markovian cascade uses $\hat x_0^{(1)} \sim \mathcal{N}(0,\sigma^2 I_{d_1})$ and, for $k\ge 2$, $\hat x_0^{(k)} = U_k(\hat x_1^{(k-1)}) + \sigma_k \zeta_k$ with $\zeta_k\!\sim\!\mathcal{N}(0,I_{d_k})$.
\State \textbf{Initialize stage 1 (noise-to-image):} Sample $\hat x_0^{(1)} \sim \mathcal{N}(0,\sigma^2 I_{d_1})$; set $\hat X^{(1)}_{0} \gets \hat x_0^{(1)}$.
\For{$k=1$ {\bf to} $K$}
  \State $\Delta\tau \gets 1/N_k$; $e_k \gets E(r_k)$
  \If{$k>1$} \Comment{Conditional dependent coupling from previous stage}
     \State Sample $\zeta_k \sim \mathcal{N}(0,I_{d_k})$
     \State $\hat x_0^{(k)} \gets U_k\!\big(\hat x_1^{(k-1)}\big) + \sigma_k \zeta_k$
     \State $\hat X^{(k)}_{0} \gets \hat x_0^{(k)}$
  \EndIf
  \For{$n=0$ {\bf to} $N_k-1$}
     \State $\tau \gets n\,\Delta\tau$
     \State $\hat v \gets b^\theta\!\big(\hat X^{(k)}_{\tau}, \tau, e_k\big)$ \Comment{ODE velocity at rescaled time $\tau\in[0,1]$}
     \State \textbf{Euler step:} $\hat X^{(k)}_{\tau+\Delta\tau} \gets \hat X^{(k)}_{\tau} + \Delta\tau \cdot \hat v$
  \EndFor
  \State \textbf{Stage output:} $\hat x_1^{(k)} \gets \hat X^{(k)}_{1}$ \Comment{resolution $d_k$}
\EndFor
\State \Return $\hat x_1^{(K)} \in \mathbb{R}^{d_K}$ \Comment{final high-resolution sample}
\end{algorithmic}
\end{algorithm}

\section{Classifier-Free Guidance for Conditional Dependent Coupling}
\label{ap:cfg}

\textbf{Rationale.}
Because each stage $k$ in our multi-stage model learns the \emph{accurate} data distribution $\rho_1^{(k)}(x_1^{(k)} \mid x^{(K)})$ under the \textbf{conditional dependent coupling} (Sec.~\ref{sec:conditionalDepCoupling}), Classifier-Free Guidance (CFG)~\cite{ho2022classifier} can be applied \emph{uniformly} across stages. This obviates complex stage-wise guidance schedules~\cite{chen2025pixelflow,  kynkaanniemi2024applying} and yields a simpler, robust implementation.

\paragraph{Training with conditional dropout (classifier-free).}
We train the unified DiT $b^\theta$ exactly as in the conditional generation objective shown in \eqref{eq:conditional_generation_int}, with the only change that the conditioning signal is randomly dropped. Let $\mathbf{c}$ denote the conditioning signal (e.g., class label or text prompt), and let $\varnothing$ denote the null condition. Define the mixed conditioning
\begin{equation}
\bar{\mathbf{c}} \sim q(\bar{\mathbf{c}}) \;\;=\;\; (1-p_\varnothing)\,p(\mathbf{c}) \;+\; p_\varnothing\,\delta_{\varnothing},
\end{equation}
where $p_\varnothing \in (0,1)$ is the dropout probability and $\delta_{\varnothing}$ is a point mass at $\varnothing$.
The training loss becomes
\begin{equation}
\label{eq:cfg_training_loss}
\mathcal{L}_{\text{cfg}}(\theta)
=\mathbb{E}_{\substack{t\sim[0,1],\, k,\, e_k,\\ (x_0^{(k)}, x_1^{(k)}) \sim \rho^{(k)}(x_0^{(k)}, x_1^{(k)} | x^{(K)}),\\ \bar{\mathbf{c}}\sim q}}
\!\left[\; \big\lVert b^{\theta}\!\big(I_\tau^{(k)}, \tau, e_k, \bar{\mathbf{c}}\big) \big\rVert^2
\;-\;2\,\dot{I}_\tau^{(k)}\!\cdot b^{\theta}\!\big(I_\tau^{(k)}, \tau, e_k, \bar{\mathbf{c}}\big) \;\right],
\end{equation}
with $I_\tau^{(k)}=(1-\tau)x_0^{(k)}+\tau x_1^{(k)}$, $\dot{I}_\tau^{(k)}=x_1^{(k)}-x_0^{(k)}$, and $\tau=\frac{t-t_0^k}{t_1^k-t_0^k}$ as in Sec.~\ref{sec:conditionalDepCoupling}.

\paragraph{CFG at inference: guided vector field.}
Let the \emph{conditional} and \emph{unconditional} vector-field predictions be
\begin{equation}
b_\mathbf{c}^{\theta}(z,\tau,e_k) \;=\; b^{\theta}(z,\tau,e_k,\mathbf{c}),
\qquad
b_\varnothing^{\theta}(z,\tau,e_k) \;=\; b^{\theta}(z,\tau,e_k,\varnothing).
\end{equation}
Given a global guidance scale $S_{\text{cfg}}\!\ge 0$ that is \emph{shared across all stages $k$}, we define the CFG-guided field by the standard linear rule~\cite{ho2022classifier}:
\begin{equation}
\label{eq:cfg_guided_field}
\begin{aligned}
b_{\text{cfg}}^{\theta}(z,\tau,e_k;\mathbf{c},S_{\text{cfg}})
&\;=\; b_\varnothing^{\theta}(z,\tau,e_k)
\;+\; S_{\text{cfg}}\Big( b_\mathbf{c}^{\theta}(z,\tau,e_k) - b_\varnothing^{\theta}(z,\tau,e_k) \Big)  \\
&\;=\; (1-S_{\text{cfg}})\,b_\varnothing^{\theta} + S_{\text{cfg}}\,b_\mathbf{c}^{\theta}.
\end{aligned}
\end{equation}
This recovers unconditional generation when $S_{\text{cfg}}=0$, the nominal conditional model when $S_{\text{cfg}}=1$, and stronger condition-following for $S_{\text{cfg}}>1$.

\paragraph{Stage-wise conditional generation under coupling.}
At inference, the state $X_\tau^{(k)}$ in stage $k$ evolves under the guided ODE driven by~\eqref{eq:cfg_guided_field}:
\begin{equation}
\label{eq:cfg_stage_ode}
\frac{\mathrm{d}}{\mathrm{d}\tau} X_\tau^{(k)}
\;=\; b_{\text{cfg}}^{\theta}\!\big(X_\tau^{(k)}, \tau, e_k;\mathbf{c}, S_{\text{cfg}}\big),
\qquad \tau\in[0,1],
\end{equation}
with initial condition
\begin{equation}
\label{eq:cfg_stage_ic}
X_0^{(1)}=x_0^{(1)} \sim \mathcal{N}(0,\sigma^2 I_{d_1}),
\qquad
X_0^{(k)} \;=\; U_k\!\big(\hat{x}_1^{(k-1)}\big) + \sigma_k \zeta_k,\;\; \zeta_k\sim\mathcal{N}(0,I_{d_k}),\;\; k\ge 2,
\end{equation}
where the conditional dependent coupling is exactly the construction in~\eqref{eq:data_dependent_coupling_compute}.
The stage output is the terminal state
\begin{equation}
\label{eq:cfg_stage_solution}
\hat{x}_1^{(k)} \;=\; X_1^{(k)}
\;=\; X_0^{(k)} \;+\; \int_{0}^{1} b_{\text{cfg}}^{\theta}\!\big(X_t^{(k)}, t, e_k;\mathbf{c}, S_{\text{cfg}}\big)\,\mathrm{d}t,
\end{equation}
and the full cascade is obtained by the recursion
\begin{equation}
\label{eq:cfg_markov_recursion}
\hat{x}_1^{(1)} \;=\; X_1^{(1)},\qquad
\hat{x}_1^{(k)} \;=\; X_1^{(k)}\big(\,U_k(\hat{x}_1^{(k-1)})+\sigma_k\zeta_k\,\big),\;\; k=2,\dots,K,
\end{equation}
which is the same Markov structure as~\eqref{eq:inference_markov} but with the guided field $b_{\text{cfg}}^{\theta}$.

\paragraph{Remarks.}
(i) The use of a \emph{single} $S_{\text{cfg}}$ across all stages is justified by the fact that, under conditional dependent coupling, each stage already targets the correct $\rho_1^{(k)}(x_1^{(k)}\!\mid x^{(K)})$. Hence, CFG serves primarily to bias the direction of transport within each accurately learned stage rather than to compensate for a stage mismatch, removing the need for stage-wise tuning. 
(ii) For an SDE parameterization, the same rule~\eqref{eq:cfg_guided_field} is applied to the predicted score/drift in place of $b^\theta$:
\[
s_{\text{cfg}}^{\theta}(z,t,e_k;\mathbf{c},S_{\text{cfg}})
= s_\varnothing^{\theta}(z,t,e_k)
+ S_{\text{cfg}}\!\left(s_\mathbf{c}^{\theta}(z,t,e_k)-s_\varnothing^{\theta}(z,t,e_k)\right),
\]
and the sampler integrates the corresponding guided stochastic dynamics with $X_0^{(k)}$ as in~\eqref{eq:cfg_stage_ic}.
(iii) Following the CFG strategy introduced in this section for conditional generation, we extend the unconditional generation procedure described in Appendix~\S\ref{ap:alg_uncondition}, and provide the training and inference procedures of the unified multi-stage generative model $b^\theta$ with CFG as shown below.

\begin{algorithm}[H]
\caption{Training a unified multi-stage Flow-Matching model $b^\theta$ with Classifier-Free Guidance}
\label{alg:training_cfg_multistage_fm}
\begin{algorithmic}[1]
\Require Dataset $\mathcal{D}=\{x^{(K)}, \mathbf{c}\}$ with conditions $\mathbf{c}$ (e.g., class/text); stages $k\in\{1,\dots,K\}$ with dimensions $d_k$, feature-map resolutions $r_k$ and resolution embedding $e_k=E(r_k)$; down/upsamplers $D_k:\mathbb{R}^{d_k}\!\to\!\mathbb{R}^{d_{k-1}}$ and $U_k:\mathbb{R}^{d_{k-1}}\!\to\!\mathbb{R}^{d_k}$; composite downsampler $D_{K\to k}$; time partition $0=t^1_0<t^1_1=t^2_0<\cdots<t^K_1=1$; noise scales $\sigma$ (stage 1) and $\{\sigma_k\}_{k=2}^K$; classifier-free dropout probability $p_{\varnothing}\in(0,1)$; batch size $B$.
\Statex \textbf{Comment:} Ground-truth at stage $k$ is $x_1^{(k)}=D_{K\to k}(x^{(K)})\sim\rho_1^{(k)}(\cdot\mid x^{(K)})$. Conditional dropout samples $\bar{\mathbf{c}}$ from $q(\bar{\mathbf{c}})=(1-p_{\varnothing})\,p(\mathbf{c})+p_{\varnothing}\,\delta_{\varnothing}$.
\For{each training step}
  \For{$i=1$ {\bf to} $B$}
    \State Sample $(x^{(K)}, \mathbf{c}) \sim \mathcal{D}$; sample $t \sim \mathcal{U}[0,1]$.
    \State Find $k$ such that $t \in [t^k_0,t^k_1]$; set $\tau \gets \dfrac{t-t^k_0}{t^k_1-t^k_0}\in[0,1]$.
    \State \textbf{Target at stage $k$:} $x_1^{(k)} \gets D_{K\to k}(x^{(K)})$.
    \If{$k=1$} \Comment{Stage $1$: noise-to-image, independent prior}
        \State Sample $x_0^{(1)} \sim \mathcal{N}(0,\sigma^2 I_{d_1})$ \Comment{$\rho_0^{(1)}(x_0^{(1)})$}
    \Else \Comment{Stage $k>1$: conditional dependent coupling}
        \State Sample $\zeta_k \sim \mathcal{N}(0,I_{d_k})$
        \State $m_k(x_1^{(k)}) \gets U_k\!\big(D_k(x_1^{(k)})\big)$
        \State $x_0^{(k)} \gets m_k(x_1^{(k)}) + \sigma_k \zeta_k$ \Comment{$\rho_0^{(k)}(\cdot\mid x_1^{(k)})=\mathcal{N}(m_k,\sigma_k^2 I)$}
    \EndIf
    \State \textbf{Linear interpolant:} $I_\tau^{(k)} \gets (1-\tau)\,x_0^{(k)} + \tau\,x_1^{(k)}$
    \State \textbf{Target velocity:} $\dot I_\tau^{(k)} \gets x_1^{(k)} - x_0^{(k)}$ \Comment{constant in $\tau$ for linear path}
    \State \textbf{Conditional dropout:} Draw $u\sim\mathcal{U}[0,1]$; set $\bar{\mathbf{c}}\gets \varnothing$ if $u<p_{\varnothing}$, else $\bar{\mathbf{c}}\gets \mathbf{c}$.
    \State $e_k \gets E(r_k)$ \Comment{resolution embedding fused with time embedding}
    \State $u_\theta \gets b^\theta\!\big(I_\tau^{(k)}, \tau, e_k, \bar{\mathbf{c}}\big)$ \Comment{DiT vector field shared across stages}
    \State Per-sample loss: $\ell_i \gets \|u_\theta\|_2^2 - 2\,\dot I_\tau^{(k)}\!\cdot u_\theta$ \Comment{matches \eqref{eq:cfg_training_loss}}
  \EndFor
  \State $\mathcal{L}_{\text{cfg}}(\theta) \gets \frac{1}{B}\sum_{i=1}^B \ell_i$
  \State Update parameters: $\theta \leftarrow \theta - \eta \nabla_\theta \mathcal{L}_{\text{cfg}}(\theta)$
\EndFor
\end{algorithmic}
\end{algorithm}

\begin{algorithm}[H]
\caption{Inference via sequential multi-stage ODE integration with Classifier-Free Guidance}
\label{alg:inference_cfg_multistage_fm}
\begin{algorithmic}[1]
\Require Trained vector field $b^\theta$; user condition $\mathbf{c}$; global guidance scale $S_{\text{cfg}}\ge 0$ (shared across stages); stages $k\in\{1,\dots,K\}$ with dimensions $d_k$, feature-map resolutions $r_k$ and resolution embedding $e_k=E(r_k)$; upsamplers $U_k:\mathbb{R}^{d_{k-1}}\!\to\!\mathbb{R}^{d_k}$; noise scales $\sigma$ (stage 1) and $\{\sigma_k\}_{k=2}^K$; per-stage step counts $\{N_k\}$.
\Statex \textbf{Comment:} Guided field $b_{\text{cfg}}^\theta(z,\tau,e_k;\mathbf{c},S_{\text{cfg}})=(1-S_{\text{cfg}})b_\varnothing^\theta(z,\tau,e_k)+S_{\text{cfg}}b_\mathbf{c}^\theta(z,\tau,e_k)$, cf.\ \eqref{eq:cfg_guided_field}.
\State \textbf{Initialize stage 1 (noise-to-image):} Sample $\hat x_0^{(1)} \sim \mathcal{N}(0,\sigma^2 I_{d_1})$; set $\hat X^{(1)}_{0} \gets \hat x_0^{(1)}$.
\For{$k=1$ {\bf to} $K$}
  \State $\Delta\tau \gets 1/N_k$; $e_k \gets E(r_k)$
  \If{$k>1$} \Comment{Conditional dependent coupling from previous stage}
     \State Sample $\zeta_k \sim \mathcal{N}(0,I_{d_k})$
     \State $\hat x_0^{(k)} \gets U_k\!\big(\hat x_1^{(k-1)}\big) + \sigma_k \zeta_k$
     \State $\hat X^{(k)}_{0} \gets \hat x_0^{(k)}$
  \EndIf
  \For{$n=0$ {\bf to} $N_k-1$}
     \State $\tau \gets n\,\Delta\tau$
     \State \textbf{Unconditional:} $\hat v_\varnothing \gets b^\theta\!\big(\hat X^{(k)}_{\tau}, \tau, e_k, \varnothing\big)$
     \State \textbf{Conditional:} $\hat v_\mathbf{c} \gets b^\theta\!\big(\hat X^{(k)}_{\tau}, \tau, e_k, \mathbf{c}\big)$
     \State \textbf{Guided field:} $\hat v_{\text{cfg}} \gets (1-S_{\text{cfg}})\hat v_\varnothing + S_{\text{cfg}}\hat v_\mathbf{c}$ \Comment{$S_{\text{cfg}}{=}0$ uncond., $S_{\text{cfg}}{=}1$ nominal cond.}
     \State \textbf{Euler step:} $\hat X^{(k)}_{\tau+\Delta\tau} \gets \hat X^{(k)}_{\tau} + \Delta\tau \cdot \hat v_{\text{cfg}}$
  \EndFor
  \State \textbf{Stage output:} $\hat x_1^{(k)} \gets \hat X^{(k)}_{1}$ \Comment{resolution $d_k$}
\EndFor
\State \Return $\hat x_1^{(K)} \in \mathbb{R}^{d_K}$ \Comment{final high-resolution sample}
\end{algorithmic}
\end{algorithm}

\section{Implement Details}
We report the implementation details of our model in Table~\ref{tab:pixelflow_params}.
\begin{table}[h]
\centering
\caption{Complete Parameter Configuration for PixelFlow}
\label{tab:pixelflow_params}
\resizebox{\textwidth}{!}{%
\begin{tabular}{|l|l|l|l|}
\hline
\textbf{Category} & \textbf{Parameter} & \textbf{Value} & \textbf{Description} \\
\hline
\multirow{12}{*}{\textbf{Model Architecture}} & \texttt{num\_attention\_heads} & 16 & Number of attention heads \\
& \texttt{attention\_head\_dim} & 72 & Dimension of each attention head \\
& \texttt{in\_channels} & 3 & Input channels (RGB) \\
& \texttt{out\_channels} & 3 & Output channels (RGB) \\
& \texttt{depth} & 28 & Number of transformer layers \\
& \texttt{num\_classes} & 1000 & Number of ImageNet classes \\
& \texttt{patch\_size} & 4 & Patch size for patch embedding \\
& \texttt{attention\_bias} & \texttt{true} & Whether to use bias in attention \\
& \texttt{embed\_dim} & 1152 & Embedding dimension (16 × 72) \\
& \texttt{dropout} & 0.0 & Dropout rate \\
& \texttt{cross\_attention\_dim} & \texttt{512} & Cross-attention dimension \\
& \texttt{max\_token\_length} & 512 & Maximum token length for text \\
\hline
\multirow{3}{*}{\textbf{Scheduler}} & \texttt{num\_train\_timesteps} & 1000 & Number of training timesteps \\
& \texttt{num\_stages} & 4 & Number of cascade stages \\
& \texttt{diminish factor $\gamma$} & 2.0 & Flow parameter \\
\hline
\multirow{6}{*}{\textbf{Training}} & \texttt{lr} & 1e-4 & Learning rate \\
& \texttt{weight\_decay} & 0.0 & Weight decay \\
& \texttt{epochs} & 10 & Number of training epochs \\
& \texttt{grad\_clip\_norm} & 1.0 & Gradient clipping norm \\
& \texttt{ema\_decay} & 0.9999 & EMA decay rate \\
& \texttt{logging\_steps} & 10 & Logging frequency \\
\hline
\multirow{6}{*}{\textbf{Data}} & \texttt{root} & \texttt{/public/datasets/ILSVRC2012/train} & Dataset root path \\
& \texttt{center\_crop} & \texttt{false} & Whether to center crop \\
& \texttt{resolution} & 256 & Image resolution \\
& \texttt{expand\_ratio} & 1.125 & Image expansion ratio \\
& \texttt{num\_workers} & 4 & Number of data loader workers \\
& \texttt{batch\_size} & 4 & Batch size per GPU \\
\hline
\multirow{4}{*}{\textbf{Data Augmentation}} & \texttt{RandomHorizontalFlip} & \texttt{true} & Random horizontal flip \\
& \texttt{RandomCrop} & \texttt{true} & Random crop (if not center crop) \\
& \texttt{Resize} & \texttt{LANCZOS} & Resize interpolation method \\
& \texttt{Normalize} & \texttt{[0.5, 0.5, 0.5]} & Normalization mean and std \\
\hline
\multirow{5}{*}{\textbf{Optimization}} & \texttt{optimizer} & \texttt{AdamW} & Optimizer type \\
& \texttt{Optimizer Hyperparameters} & \texttt{$\beta_1=0.9$, $\beta_2=0.95$, $\epsilon=1e^{-6}$} & Optimizer Hyperparameters \\
& \texttt{Learning rate} & \texttt{$1e^{-4}$} & Learning rate \\
& \texttt{precision} & \texttt{bfloat16} & Training precision \\
& \texttt{deterministic\_ops} & \texttt{false} & Deterministic operations \\
\hline
\multirow{5}{*}{\textbf{Inference}} & \texttt{num\_inference\_steps} & 24 & Steps per stage (default) \\
& \texttt{CFG strength $S_\text{cfg}$} & 3.0 & CFG scale (evaluation) \\
& \texttt{shift} & 1.0 & Noise shift parameter \\
& \texttt{use\_ode\_dopri5} & \texttt{false} & Use ODE solver \\
& \texttt{num\_fid\_samples} & 50000 & Number of FID samples \\
\hline
\multirow{4}{*}{\textbf{ODE Solver}} & \texttt{sampler\_type} & \texttt{dopri5} & ODE solver type \\
& \texttt{atol} & 1e-06 & Absolute tolerance \\
& \texttt{rtol} & 0.001 & Relative tolerance \\
& \texttt{t0, t1} & 0, 1 & Time range \\
\hline
\end{tabular}%
}
\end{table}



\end{document}